\def\1{\bm{1}}
\def\eps{{\epsilon}}
\DeclareMathAlphabet{\mathsfit}{\encodingdefault}{\sfdefault}{m}{sl}
\SetMathAlphabet{\mathsfit}{bold}{\encodingdefault}{\sfdefault}{bx}{n}
\newcommand{\E}{\mathbb{E}}
\newcommand{\R}{\mathbb{R}}
\newcommand{\softmax}{\mathrm{softmax}}
\newcommand{\softplus}{\zeta}
\newcommand{\Proba}{\mathbb{P}}
\newcommand{\diag}{\mathrm{diag}}
\newcommand{\lse}{\mathrm{LSE}}
\def\norm#1{\left\| #1 \right\|}
\def\abs#1{\left| #1 \right|}
\def\prth#1{\left( #1 \right)}
\def\sqprth#1{\left[ #1 \right]}
\def\braces#1{\left\{ #1 \right\}}
\def\scprod#1#2{\left\langle #1, #2 \right\rangle}
\newcommand{\poly}{\mathrm{poly}}
\newcommand{\loss}{\mathrm{loss}}
\newtheorem{assumption}{Assumption}
\newtheorem{proposition}{Proposition}
\newtheorem{theorem}{Theorem}
\newtheorem{lemma}{Lemma}
\definecolor{myred}{rgb}{0.77, 0.0, 0.1}
\definecolor{newgreen}{RGB}{0,153,0}
\definecolor{myturq}{rgb}{0.1, 0.7, 0.7}
\title{About contrastive unsupervised representation learning for classification and its convergence}
\author{Ibrahim Merad \and Yiyang Yu \and Emmanuel Bacry \and
	St\'ephane Ga\"iffas
}
\author{
  Ibrahim Merad%
  \thanks{LPSM, UMR 8001, Universit\'e de Paris, Paris, France, \texttt{imerad@lpsm.paris}}\\
  \and
  Yiyang Yu%
  \thanks{LPSM, UMR 8001, Universit\'e de Paris, Paris, France, \texttt{yyu@lpsm.paris}}
  \and 
  Emmanuel Bacry%
  \thanks{CEREMADE, Universit\'e Paris-Dauphine, \texttt{bacry@ceremade.dauphine.fr}}
  \and
  Stéphane Gaïffas%
  \thanks{LPSM, UMR 8001, Universit\'e de Paris, Paris, France, \texttt{gaiffas@lpsm.paris}}
  \thanks{DMA, CNRS UMR 8553, Ecole normale sup\'erieure, Paris, France}
  \and   
}
\begin{document}

\maketitle

\begin{abstract}
Contrastive representation learning has been recently proved to be very efficient for self-supervised training. These methods have been successfully used to train encoders which perform comparably to supervised training on downstream classification tasks. 
A few works have started to build a theoretical framework around contrastive learning in which guarantees for its performance can be proven. We provide extensions of these results to training with multiple negative samples and for multiway classification. 
Furthermore, we provide convergence guarantees for the minimization of the contrastive training error with gradient descent of an overparametrized deep neural encoder, and provide some numerical experiments that complement our theoretical findings.

\medskip
\noindent
\emph{Keywords.} Unsupervised Learning $\cdot$ Contrastive Learning $\cdot$ Deep Neural Networks $\cdot$ Theoretical guarantees
\end{abstract}

\section{Introduction}
\label{sec:introduction}

The aim of this work is to provide additional theoretical guarantees for \emph{contrastive learning}~\citep{DBLP:journals/corr/abs-1807-03748}, which corresponds to methods allowing to learn useful data representations in an \emph{unsupervised} setting.
Unsupervised representation learning was initially approached with a fair amount of success by training through the minimization of losses coming from ``pretext'' tasks, a technique known as \emph{self-supervision} \citep{DBLP:journals/corr/abs-1708-07860}, where labels can be automatically constructed.
Notable examples of pretext tasks in computer vision include colorization~\citep{DBLP:journals/corr/ZhangIE16}, transformation prediction~\citep{DBLP:journals/corr/abs-1803-07728,DBLP:journals/corr/DosovitskiySRB14} or predicting patch relative positions~\citep{DBLP:journals/corr/DoerschGE15}. 
Some theoretical guarantees~\citep{lee2020predicting} were recently proposed to support training on pretext tasks. 

Contrastive learning is also known to be very effective for pretraining supervised methods~\citep{chen2020simple,chen2020big,grill2020bootstrap,caron2020unsupervised}, where we can observe that, quite surprisingly, the gap between unsupervised and supervised performance has been closed for tasks such as image classification: the use of a pretrained image encoder on top of simple classification layers, that are trained on a fraction of the labels available, allows to achieve an accuracy comparable to that of a fully supervised end-to-end training~\citep{DBLP:journals/corr/abs-1905-09272, grill2020bootstrap}.
Contrastive methods show also strong success in natural language processing~\citep{logeswaran2018efficient, mikolov2013distributed, DBLP:journals/corr/abs-1810-04805, DBLP:journals/corr/abs-1807-03748}, video classification~\citep{sun2019learning}, reinforcement learning~\citep{srinivas2020curl} and time-series~\citep{franceschi2019unsupervised}.

Although the papers cited above introduce methods with considerable variations, they mostly agree on the following basic pretraining approach: 
provided a dataset, an encoder is trained using a contrastive loss whose minimization allows to learn embeddings that are \emph{similar} for pairs of samples (called the \emph{positives}) that are close to each other (such as pairs of random data augmentations of the same image, see~\citet{he2020momentum, chen2020simple}), while such embeddings are \emph{contrasted} for dissimilar pairs (called the \emph{negatives}).
 
However, despite growing efforts~\citep{saunshi2019theoretical, wang2020understanding}, as of today, few theoretical results have been obtained. 
For instance, there is still no clear theoretical explanation of how a supervised task could benefit from an upstream  unsupervised pretraining phase, or of what could be the theoretical guarantees for the convergence of the minimization procedure of the contrastive loss during this pretraining phase. 
Getting some answers to these questions would undoubtedly be a step towards a better theoretical understanding of contrastive representation learning.

Our contributions in this paper are twofold. 
In Section~\ref{sec:sec3}, we provide new theoretical guarantees for the classification performance of contrastively trained models in the case of multiway classification tasks, using \emph{multiple} negative samples. We extend results from~\citet{saunshi2019theoretical} to show that unsupervised training performance reflects on a subsequent classification task in the case of multiple tasks and when a high number of negative samples is used. 
In Section~\ref{subsec:conv}, we prove a convergence result for an \emph{explicit} algorithm (gradient descent), when training overparametrized deep neural network for unsupervised contrastive representation learning. We explain how results from~\citet{allen2019convergence} about training convergence of overparametrized deep neural networks can be applied to a contrastive learning objective.
The results and major assumptions of both Sections~\ref{sec:sec3} and~\ref{subsec:conv} are illustrated in Section~\ref{sec:experiments} through experiments on a few simple datasets. 

\section{Related work}

A growing literature attempts to build a theoretical framework around contrastive learning and to provide justifications for its success beyond intuitive ideas.
In~\citet{saunshi2019theoretical} a formalism is proposed together with results on classification performance based on unsupervisedly learned representation.
However, these results do not explain the performance gain that is observed empirically~\citep{chen2020simple, he2020momentum} when a high number of negative samples are used, while the results proposed in Section~\ref{sec:sec3} below hold for an arbitrary large number of negatives (and decoupled from the number of classification tasks).
A more recent work~\citep{wang2020understanding} emphasizes the two tendencies encouraged by the contrastive loss: 
the encoder's outputs are incentivized to spread evenly on the unit hypersphere, and encodings of same-class samples are driven close to each other while those of different classes are driven apart.
Interestingly, this work also shows how the tradeoff between these two aspects can be controlled, by introducing weight factors in the loss leading to improved performance. 
\citet{chuang2020debiased} considers the same setting as~\citet{saunshi2019theoretical} and addresses the bias problem that comes from collisions between positive and negative sampling in the unsupervised constrastive loss. 
They propose to simulate unbiased negative sampling by assuming, among other things, extra access to positive sampling. 
However, one has to keep in mind that an excessive access to positive sampling gets the setting closer to that of supervised learning. 

In a direction that is closer to the result proposed in Section~\ref{subsec:conv} below,~\cite{wen2020convergence} provides a theoretical guarantee on the training convergence of gradient descent for an overparametrized model that is trained with an unsupervised contrastive loss, using earlier works by~\citet{allen2019convergence}. 
However, two separate encoders are considered instead of a single one: one for the query, which corresponds to a sample from the dataset, and one for the (positive and negative) samples to compare the query to.
In this setting, it is rather unclear how the two resulting encoders are to be used for downstream classification. 
In Section~\ref{subsec:conv} below, we explain how the results from~\citet{allen2019convergence} can be used for the more realistic setting of a single encoder, by introducing a reasonable assumption on the encoder outputs.

\section{Unsupervised training improves supervised performance}
\label{sec:sec3}

In this section, we provide new results in the setting previously considered in~\citet{saunshi2019theoretical}.
We assume that data are distributed according to a finite set $\mathcal{C}$ of latent classes, and denote $N_{\mathcal{C}} = \text{card}(\mathcal{C})$ its cardinality. 
Let $\rho$ be a discrete distribution over $\mathcal{C}$ that is such that
\begin{equation*}
    \sum_{c \in \mathcal{C}} \rho(c) = 1 \quad \text{and} \quad \rho(c) > 0
\end{equation*}
for all $c \in \mathcal C$. We denote $\mathcal{D}_c$ a distribution over the feature space $\mathcal{X}$ from a class $c \in \mathcal C$.
In order to perform unsupervised contrastive training, on the one hand we assume that we can sample \emph{positive} pairs $(x, x^+)$ from the distribution
\begin{equation}
    \label{Dsim} 
    \mathcal{D}_{\mathrm{sim}}(x, x^+) = \sum_{c \in \mathcal{C}} \rho(c) \mathcal{D}_c(x)\mathcal{D}_c(x^+),
\end{equation}
namely, $(x, x^+)$ is sampled as a mixture of independent pairs conditionally to a shared latent class, sampled according to $\rho$.
On the other hand, we assume that we can sample \emph{negative} samples $x^-$ from the distribution
\begin{equation}
    \label{Dneg}
    \mathcal{D}_{\mathrm{neg}}(x^-) = \sum_{c \in \mathcal{C}} \rho(c) \mathcal{D}_c(x^-).
\end{equation}
Given $k \leq N_{\mathcal{C}}-1$, a $(k+1)$-way classification task is a subset $\mathcal{T} \subseteq \mathcal{C}$ of cardinality $\abs{\mathcal{T}} = k+1$, which induces the conditional distribution
\begin{equation*}
  \mathcal{D}_{\mathcal{T}} (c) = \rho(c \mid c \in \mathcal{T})
\end{equation*}
for $c \in \mathcal{C}$ and we define
\begin{equation*}
   \mathcal{D}_{\mathcal{T}}(x,c) = \mathcal{D}_{\mathcal{T}}(c) \mathcal{D}_c(x).  
\end{equation*}
In particular, we denote as $\mathcal{C}$, whenever there is no ambiguity, the $N_{\mathcal{C}}$-way classification task where the labels are sampled from $\rho$, namely $\mathcal{D}_{\mathcal{C}}(x,c) = \rho(c) \mathcal{D}_c(x)$.

\paragraph{Supervised loss and mean classifier.}

For an encoder function $f: \mathcal{X} \to \mathbb{R}^{d}$, we define a supervised loss (cross-entropy with the best possible linear classifier on top of the representation) over task $\mathcal{T}$ as
\begin{equation}
    \label{LsupT}
    L_{\mathrm{sup}}(f, \mathcal{T}) =\inf_{W \in \R^{\abs{\mathcal{T}} \times d}} \E_{(x, c) \sim \mathcal{D_{\mathcal{T}}}} \sqprth{ - \log \prth{\frac{\exp \prth{W f(x)}_c}{\sum_{c' \in \mathcal{T}} \exp \prth{W f(x)}_{c'}}}}.
\end{equation}
Then, it is natural to consider the \emph{mean} or \emph{discriminant} classifier with weights $W^\mu$ which stacks, for $c \in \mathcal{T}$,
the vectors 
\begin{equation}
    \label{Wmu}
    W^{\mu}_{c,:} = \E_{x \sim \mathcal{D}_c} \sqprth{f(x)}
\end{equation}
and whose corresponding (supervised) loss is given by
\begin{equation}
    \label{LmusupT}
    L_{\mathrm{sup}}^\mu(f, \mathcal{T}) = \E_{(x, c) \sim \mathcal{D}_{\mathcal{T}}} \sqprth{ - \log \prth{ \frac{\exp \prth{W^\mu f(x)}_c}{\sum_{c' \in \mathcal{T}} \exp \prth{W^\mu f(x)}_{c'}} }}.
\end{equation}
Note that, obviously, one has $L_{\mathrm{sup}}(f, \mathcal{T}) \leq L_{\mathrm{sup}}^\mu(f, \mathcal{T})$.

\paragraph{Unsupervised contrastive loss.}

We consider the unsupervised contrastive loss \emph{with $N$ negative samples} given by
\begin{equation}
    \label{L_un_k}
    L_{\mathrm{un}}^N(f) = \E_{\substack{(x, x^+) \sim \mathcal{D}_{\mathrm{sim}} \\ X^- \sim \mathcal{D}_{\mathrm{neg}}^{\otimes N}}} \sqprth{ - \log \prth{ \frac{\exp \prth{f(x)^T f(x^+)}}{\exp \prth{f(x)^T f(x^+)} + \sum_{x^- \in X^-} \exp \prth{f(x)^T f(x^-)}}}},
\end{equation}
where $\mathcal{D}_{\mathrm{sim}}$ is given by Equation~\eqref{Dsim} and where $\mathcal{D}_{\mathrm{neg}}^{\otimes N}$ stands for the $N$ tensor product of the $\mathcal{D}_{\mathrm{neg}}$ distribution given by Equation~\eqref{Dneg}.
When a single negative sample is used ($N=1$), we will use the notation $L_{\mathrm{un}}(f) = L_{\mathrm{un}}^1(f)$.
In the rest of the paper, $N$ will stand for the number of negatives used in the unsupervised loss~\eqref{L_un_k}.

\subsection{Inequalities for unsupervised training with multiple classes}

The following Lemma states that the unsupervised objective with a single negative sample can be related to the supervised loss for which the target task is classification over the whole set of latent classes $\mathcal{C}$.
 \begin{lemma}
    \label{lem81}
 For any encoder $f:\mathcal{X} \to \mathbb{R}^d$\textup, one has
    \begin{equation}\label{ineq31}
         L_{\mathrm{sup}}(f, \mathcal{C}) \leq L_{\mathrm{sup}}^\mu (f, \mathcal{C}) \leq \frac{1}{p^\rho_{\mathrm{min}}} L_{\mathrm{un}}(f) + \log N_{\mathcal{C}},
    \end{equation}
    where $p^\rho_{\mathrm{min}} = \min_c \rho(c)$.
\end{lemma}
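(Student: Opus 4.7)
The first inequality $L_{\mathrm{sup}}(f, \mathcal{C}) \leq L_{\mathrm{sup}}^\mu(f, \mathcal{C})$ is immediate, since the mean classifier $W^\mu$ of~\eqref{Wmu} is an admissible choice in the infimum defining $L_{\mathrm{sup}}(f, \mathcal{C})$ in~\eqref{LsupT}. The real work lies in the second inequality, which I would prove by combining (i) a Jensen lower bound on $L_{\mathrm{un}}(f)$ that surfaces the mean classifier, and (ii) a pointwise upper bound on the log-sum-exp in $L_{\mathrm{sup}}^\mu(f, \mathcal{C})$ that introduces the constants $1/p^\rho_{\mathrm{min}}$ and $\log N_{\mathcal{C}}$.

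For step (i), I would rewrite the integrand of $L_{\mathrm{un}}(f)$ as $\log\prth{1 + e^{f(x)^T(f(x^-) - f(x^+))}}$, which is convex in each of the inner products $f(x)^T f(x^+)$ and $f(x)^T f(x^-)$ because $t \mapsto \log(1+e^{-t})$ is convex. Decomposing the negative distribution as the mixture $c^- \sim \rho$, $x^- \sim \mathcal{D}_{c^-}$ and applying Jensen's inequality sequentially to the inner expectations over $x^+ \sim \mathcal{D}_c$ and $x^- \sim \mathcal{D}_{c^-}$ yields
\begin{equation*}
    L_{\mathrm{un}}(f) \geq \E_{c, c^- \sim \rho,\; x \sim \mathcal{D}_c} \sqprth{\log\prth{1 + e^{f(x)^T(\mu_{c^-} - \mu_c)}}},
\end{equation*}
where $\mu_c = W^{\mu}_{c,:}$.

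For step (ii), I would fix $(c, x)$ and set $z_{c'} = f(x)^T(\mu_{c'} - \mu_c)$, so that $z_c = 0$. The key chain is
\begin{equation*}
    \log \sum_{c' \in \mathcal{C}} e^{z_{c'}} \leq \log N_{\mathcal{C}} + \max_{c' \in \mathcal{C}} z_{c'} \leq \log N_{\mathcal{C}} + \sum_{c' \in \mathcal{C}} \log(1 + e^{z_{c'}}),
\end{equation*}
where the first step is the standard log-sum-exp bound and the second uses $\max_{c'} z_{c'} \geq z_c = 0$ together with the softplus inequality $(z)_+ \leq \log(1+e^z)$. Next, since $\rho(c') \geq p^\rho_{\mathrm{min}}$ for every $c'$, I can bound $\sum_{c'} \log(1 + e^{z_{c'}}) \leq (p^\rho_{\mathrm{min}})^{-1} \sum_{c'} \rho(c') \log(1 + e^{z_{c'}}) = (p^\rho_{\mathrm{min}})^{-1} \E_{c^- \sim \rho}[\log(1 + e^{z_{c^-}})]$. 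Taking the expectation over $(c, x) \sim \mathcal{D}_{\mathcal{C}}$ and chaining with the Jensen bound of step (i) yields the claimed inequality.

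The main obstacle is the pointwise upper bound of step (ii): one must convert a multiway softmax over $N_{\mathcal{C}}$ classes into a single binary logistic term weighted by $\rho$, and it is precisely this conversion that produces both the multiplicative factor $1/p^\rho_{\mathrm{min}}$ (because the unsupervised loss averages negatives under $\rho$ rather than uniformly) and the additive $\log N_{\mathcal{C}}$ (from the crude LSE-max step). Everything else reduces to convexity of the logistic loss and the mixture decomposition of $\mathcal{D}_{\mathrm{neg}}$.
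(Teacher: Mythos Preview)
Your proposal is correct and follows essentially the same approach as the paper: both use Jensen's inequality to replace $f(x^+),f(x^-)$ by the class means $\mu_c,\mu_{c^-}$, then combine the log-sum-exp/max bound with the softplus inequality $z\le\log(1+e^{z})$ and the trivial estimate $\rho(c')\ge p^\rho_{\mathrm{min}}$ to pass between the $\rho$-expectation over $c^-$ and the full $N_{\mathcal C}$-class softmax. The only cosmetic difference is the direction of the chain: the paper lower-bounds $L_{\mathrm{un}}$ via $\E_{c^-\sim\rho}[\zeta]\ge p^\rho_{\mathrm{min}}\max_{c^-}\zeta$ and then $\max_{c^-}\mathrm{lse}(0,z_{c^-})\ge \mathrm{lse}(z_{c_1},\dots,z_{c_{N_{\mathcal C}}})-\log N_{\mathcal C}$, whereas you upper-bound $L^\mu_{\mathrm{sup}}$ and pass through the intermediate $\sum_{c'}\log(1+e^{z_{c'}})$; the ingredients and the resulting inequality are identical.
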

The proof of Lemma~\ref{lem81} is given in the appendix, and uses a trick from Lemma~4.3 in~\citet{saunshi2019theoretical} relying on Jensen's inequality.
This Lemma relates the unsupervised and the supervised losses, a shortcoming being the introduction of $p^\rho_{\mathrm{min}}$, which is small for a large $N_{\mathcal{C}}$ since obviously $p^\rho_{\mathrm{min}} \leq 1 / N_{\mathcal{C}}$.

The analysis becomes more difficult with a larger number of negative samples.  
Indeed, in this case, one needs to carefully keep track of how many distinct classes will be represented by each draw.
This is handled by Theorem~B.1 of~\citet{saunshi2019theoretical}, but the bound given therein only estimates an expectation of the supervised loss w.r.t. the random subset of classes considered (so called tasks). 
For multiple negative samples, the approach adopted in the proof of  Lemma~\ref{lem81} above further degrades, since $p^\rho_{\mathrm{min}}$ would be replaced by the minimum probability among tuple draws, an even much smaller quantity.

We propose the following Lemma, which assumes that the number of negative samples is large enough compared to the number of latent classes.
\begin{lemma}
    \label{lem32} 
    Consider the unsupervised objective with $N$ negative samples as defined in Equation~\eqref{L_un_k} and assume that $N$ satisfies $N = \Omega(N_{\mathcal{C}} \log N_{\mathcal{C}})$.
    Then\textup, we have
    \begin{equation}
        \label{ineq32}
        L_{\mathrm{sup}}(f, \mathcal{C}) \leq L^\mu_{\mathrm{sup}}(f, \mathcal{C}) \leq \frac{1}{p^\rho_{cc}(N)} L^N_{\mathrm{un}}(f),
    \end{equation}
    where $p^\rho_{cc}(N)$ is the probability to have all coupons after $N$ draws in an $N_{\mathcal{C}}$-coupon collector problem with draws from $\rho$.
\end{lemma}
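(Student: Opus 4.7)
The first inequality $L_{\mathrm{sup}}(f,\mathcal{C}) \leq L^\mu_{\mathrm{sup}}(f,\mathcal{C})$ follows at once from the definition~\eqref{LsupT} as an infimum, by plugging in $W = W^\mu$. All the work is therefore in the second inequality, and the plan is to lower-bound $L^N_{\mathrm{un}}(f)$ by the mean-classifier loss over the full class set $\mathcal{C}$, up to the coupon-collector probability factor. The argument proceeds in three logical steps: a Jensen step that replaces sample encodings by class means, a restriction to the event ``all $N_{\mathcal{C}}$ classes appear among the $N$ negatives'', and an independence step that separates this event from the query sample.

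First, I would rewrite $L^N_{\mathrm{un}}(f)$ by conditioning on the latent class $c$ of $(x,x^+)$ and the latent classes $c_1,\ldots,c_N$ of the negatives $x^-_1,\ldots,x^-_N$. For any fixed $x$, the integrand
\begin{equation*}
    g_x(y, z_1, \ldots, z_N) \;=\; \log\!\left(1 + \sum_{i=1}^N \exp\!\bigl(f(x)^T z_i - f(x)^T y\bigr)\right)
\end{equation*}
is a log-sum-exp of affine functions of $(y,z_1,\ldots,z_N)$, hence convex. Using the conditional independence of $x^+, x^-_1, \ldots, x^-_N$ given the latent classes, Jensen's inequality applied to $g_x$ (the same trick as in Lemma~4.3 of~\citet{saunshi2019theoretical}) yields
\begin{equation*}
    L^N_{\mathrm{un}}(f) \;\geq\; \E_{c,c_1,\ldots,c_N}\E_{x\sim\mathcal{D}_c}\!\left[ -\log \frac{\exp\!\bigl(f(x)^T \mu_c\bigr)}{\exp\!\bigl(f(x)^T \mu_c\bigr) + \sum_{i=1}^N \exp\!\bigl(f(x)^T \mu_{c_i}\bigr)} \right],
\end{equation*}
where $\mu_c := W^\mu_{c,:} = \E_{x\sim\mathcal{D}_c}[f(x)]$.

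Next, let $E$ be the event that $\{c_1,\ldots,c_N\} = \mathcal{C}$, i.e.\ the coupon-collector event with draws from $\rho$. Since the integrand above is non-negative, restricting to $E$ only decreases the right-hand side. On $E$, each $c'\in\mathcal{C}$ appears at least once among the $c_i$'s, so
\begin{equation*}
    \exp\!\bigl(f(x)^T \mu_c\bigr) + \sum_{i=1}^N \exp\!\bigl(f(x)^T \mu_{c_i}\bigr) \;\geq\; \sum_{c' \in \mathcal{C}} \exp\!\bigl(f(x)^T \mu_{c'}\bigr),
\end{equation*}
which pointwise lower-bounds the integrand by exactly the mean-classifier log-loss appearing in $L^\mu_{\mathrm{sup}}(f,\mathcal{C})$. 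The crucial observation to close the argument is that $E$ depends only on $(c_1,\ldots,c_N)$ and is therefore independent of $(c,x)$ in our sampling model; factoring its probability out gives
\begin{equation*}
    L^N_{\mathrm{un}}(f) \;\geq\; \mathbb{P}(E)\cdot L^\mu_{\mathrm{sup}}(f,\mathcal{C}) \;=\; p^\rho_{cc}(N)\cdot L^\mu_{\mathrm{sup}}(f,\mathcal{C}),
\end{equation*}
which is~\eqref{ineq32} after rearrangement. The hypothesis $N = \Omega(N_{\mathcal{C}}\log N_{\mathcal{C}})$ plays no role in the chain of inequalities itself; it is the standard coupon-collector threshold ensuring that $p^\rho_{cc}(N)$ is bounded away from $0$, so that the constant $1/p^\rho_{cc}(N)$ is meaningful. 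The main delicate point I anticipate is in the Jensen step, namely verifying convexity in the right joint variables and correctly invoking conditional independence; checking that possible duplicates among the $c_i$'s only enlarge the denominator (so the event-restriction bound indeed goes in the needed direction) is the other place where a slip would reverse an inequality.
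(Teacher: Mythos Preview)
Your proposal is correct and follows essentially the same route as the paper's proof: Jensen's inequality to replace $f(x^+)$ and the $f(x_i^-)$ by class means, restriction to the coupon-collector event (which the paper writes as $|J|=N_{\mathcal{C}}$), and then monotonicity of the log-sum-exp in its arguments to recover $L^\mu_{\mathrm{sup}}(f,\mathcal{C})$. The only cosmetic difference is that the paper phrases the last step via $\lse(x_{\mathcal{S}})\le\lse(x)$ for subsets $\mathcal{S}$, whereas you state it as a denominator inequality, and you make the independence of $E$ from $(c,x)$ explicit where the paper leaves it implicit.
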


The proof of Lemma~\ref{lem32} is given in the appendix.
In this result, $p^{\rho}_{cc}(N)$ is related to the following coupon collector problem. 
Assume that $\rho$ is the uniform distribution over $\mathcal C$ and let $T$ be the random number of necessary draws until each $c \in \mathcal C$ is drawn at least once.
It is known~(see for instance~\cite{motwani_raghavan_1995}) that the expectation and variance of $T$ are respectively given by $N_{\mathcal{C}} H_{N_{\mathcal{C}}}$ and $(N_{\mathcal{C}} \pi)^2 / 6$,
where $H_n$ is the $n$-th harmonic number $H_n = \sum_{i=1}^n 1 / i$.
This entails using Chebyshev's inequality that
\begin{equation*}
    \Proba\prth{\abs{T - N_{\mathcal{C}} H_{N_{\mathcal{C}}}} \geq \beta N_{\mathcal{C}}} \leq \frac{\pi^2}{6 \beta^2}
\end{equation*}
for any $\beta >0$, so that
whenever $\rho$ is sufficiently close to a uniform distribution and $N= \Omega(N_{\mathcal{C}} \log N_{\mathcal{C}})$, the probability $p^{\rho}_{cc}$ is reasonably high.
Due to the randomness of the classes sampled during training, it is difficult to obtain a better inequality than Lemma~\ref{lem32} if we want to upper bound $L^N_{\mathrm{un}}(f)$ by the supervised $L_{\mathrm{sup}}(f, \mathcal{C})$ on all classes.
However, the result can be improved by considering the average loss over tasks $L_{\mathrm{sup},k}(f)$, as explained in the next Section.

\subsection{Guarantees on the average supervised loss}

In this Section, we bound the average of the supervised classification loss on tasks that are subsets of $\mathcal{C}$.
Towards this end, we need to assume (only in this Section) that $\rho$ is uniform.
We consider supervised tasks consisting in distinguishing one latent class from $k$ other classes, given that they are distinct and uniformly sampled from $\mathcal{C}$. 
We define the average supervised loss of $f$ for $(k+1)$-way classification as
\begin{equation}
    L_{\mathrm{sup},k}(f) = \E_{\mathcal{T} \sim \mathcal{D}^{k+1}} \sqprth{L_{\mathrm{sup}} \prth{f, \mathcal{T}} },
\end{equation}
where $\mathcal{D}^{k+1}$ is the uniform distribution over $(k+1)$-way tasks, which means uniform sampling of $\{c_1, \cdots, c_{k+1}\}$ \emph{distinct} classes in $\mathcal{C}.$
We define also the average supervised loss of the mean classifier
\begin{equation}
    L_{\mathrm{sup}, k}^\mu(f) = \E_{\mathcal{T} \sim \mathcal{D}^{k+1}} \sqprth{L_{\mathrm{sup}}^\mu \prth{f, \mathcal{T}} },
\end{equation}
where we recall that $L_{\mathrm{sup}}^\mu \prth{f, \mathcal{T}}$ is given by~\eqref{LmusupT}.
The next Proposition is a generalization to arbitrary values of $k$ and $N$ of Lemma~4.3 from~\citet{saunshi2019theoretical}, where it is assumed $k=1$ and $N = 1$.

\begin{proposition}
    \label{prop33}
    Consider the unsupervised loss $L_{\mathrm{un}}^N(f)$ from Equation~\eqref{L_un_k} with $N$ negative samples. Assume that $\rho$ is uniform over $\mathcal C$ and that $2 \leq k+1 \leq N_{\mathcal{C}}$.
    Then, any encoder function $f: \mathcal{X} \to \mathbb{R}^d$ satisfies 
    \begin{equation*}
        L_{\mathrm{sup},k}(f) \leq L_{\mathrm{sup}, k}^\mu(f) \leq \frac{k}{1-\tau_N^+} \prth{L_{\mathrm{un}}^N(f) - \tau_N^+ \log(N+1)}
    \end{equation*}
    with $\tau_N^+ = \Proba \sqprth{c_i = c, \forall i \mid (c, c_1, \cdots, c_N) \sim \rho^{\otimes N+1}}$.
\end{proposition}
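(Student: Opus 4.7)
The first inequality $L_{\mathrm{sup},k}(f)\leq L_{\mathrm{sup},k}^\mu(f)$ is immediate by averaging the pointwise bound $L_{\mathrm{sup}}(f,\mathcal{T})\leq L_{\mathrm{sup}}^\mu(f,\mathcal{T})$ over $\mathcal{T}\sim\mathcal{D}^{k+1}$. For the second inequality, the plan is to factor the estimate through the average pairwise supervised loss $L_{\mathrm{sup},1}^\mu(f)$, combining a Jensen-type lower bound on $L_{\mathrm{un}}^N$ in the spirit of Lemma~4.3 of~\citet{saunshi2019theoretical} with a subadditivity upper bound on $L_{\mathrm{sup},k}^\mu$.

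The first step applies Jensen to $L_{\mathrm{un}}^N(f)$: the map $(y_0,\ldots,y_N)\mapsto -\log\bigl(e^{y_0}/(e^{y_0}+\sum_i e^{y_i})\bigr)$ is convex, so conditioning on the classes $(c,c_1,\ldots,c_N)\sim\rho^{\otimes(N+1)}$ and on $x\sim\mathcal{D}_c$ and then applying Jensen in the independent variables $x^+,x_1^-,\ldots,x_N^-$ replaces the embeddings $f(x^+),f(x_i^-)$ by the class means $W^\mu_{c,:},W^\mu_{c_i,:}$. Writing $g_{c'}(x):=\exp(f(x)^T W^\mu_{c',:})$, this yields
\[
L_{\mathrm{un}}^N(f)\;\geq\;\E_{c}\,\E_{\mathbf{c}\sim\rho^{\otimes N}}\,\E_{x\sim\mathcal{D}_c}\!\left[\log\!\left(1+\sum_{i=1}^N \frac{g_{c_i}(x)}{g_c(x)}\right)\right].
\]
I then split on the collision event $B^c=\{c_i=c,\ \forall i\}$, of probability $\tau_N^+$. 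On $B^c$ the inner logarithm equals $\log(N+1)$ exactly. On the complement $B$, the index set $\{i:c_i\neq c\}$ is non-empty; drawing $i^*$ uniformly in this set and using monotonicity of $\log$ gives $\log(1+\sum_i g_{c_i}/g_c)\geq \log(1+g_{c_{i^*}}/g_c)$. Since $\rho$ is uniform, an exchangeability argument (permuting two labels of $\mathcal{C}\setminus\{c\}$ leaves the conditional law of $\mathbf{c}\mid B$ invariant) shows that $c_{i^*}$ is marginally uniform on $\mathcal{C}\setminus\{c\}$, so taking expectations recovers $L_{\mathrm{sup},1}^\mu(f)$ and leads to
\[
L_{\mathrm{un}}^N(f)\;\geq\;\tau_N^+\log(N+1)\,+\,(1-\tau_N^+)\,L_{\mathrm{sup},1}^\mu(f).
\]

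The second step uses the elementary subadditivity $\log\bigl(1+\sum_j x_j\bigr)\leq \sum_j \log(1+x_j)$ for $x_j\geq 0$. Applied inside $L_{\mathrm{sup}}^\mu(f,\mathcal{T})$, this bounds the $(k+1)$-way cross-entropy by a sum of $k$ pairwise binary cross-entropies. Averaging over uniform $(k+1)$-subsets $\mathcal{T}$ of $\mathcal{C}$ and weighting each ordered pair $(c,c')$ with $c\neq c'$ by $\Pr[\{c,c'\}\subseteq\mathcal{T}]=k(k+1)/\bigl(N_{\mathcal{C}}(N_{\mathcal{C}}-1)\bigr)$ then yields $L_{\mathrm{sup},k}^\mu(f)\leq k\cdot L_{\mathrm{sup},1}^\mu(f)$. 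Chaining with the previous display closes the argument:
\[
L_{\mathrm{sup},k}^\mu(f)\;\leq\; k\,L_{\mathrm{sup},1}^\mu(f)\;\leq\;\frac{k}{1-\tau_N^+}\bigl(L_{\mathrm{un}}^N(f)-\tau_N^+\log(N+1)\bigr).
\]

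The most delicate point is the symmetric reduction from $N$ i.i.d.\ negatives down to a single uniform pick in $\mathcal{C}\setminus\{c\}$: one has to verify that the class at a uniformly random position in $\{i:c_i\neq c\}$ is marginally uniform on $\mathcal{C}\setminus\{c\}$, which is exactly where the uniformity of $\rho$ is used. The Jensen step and the pointwise subadditivity are routine; the main care lies in the bookkeeping that keeps $k$ (task size) and $N$ (number of negatives) decoupled throughout.
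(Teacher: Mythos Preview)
Your proof is correct and follows essentially the same approach as the paper: Jensen to pass to class means, a split on the all-collision event yielding the $\tau_N^+\log(N+1)$ term, a reduction from $N$ negatives to a single uniform negative in $\mathcal{C}\setminus\{c\}$, and the log-subadditivity $\log(1+\sum_j x_j)\leq\sum_j\log(1+x_j)$ to pass between $(k+1)$-way and binary losses. The only cosmetic difference is that you factor explicitly through the intermediate quantity $L_{\mathrm{sup},1}^\mu(f)$ and justify the $N\to 1$ reduction via the exchangeability of $c_{i^*}$, whereas the paper performs the same two steps inline (asserting the inequality $\E[\log(1+\sum_i e^{\cdots})\mid\exists\,c_i^-\neq c]\geq \E[\log(1+e^{\cdots})\mid c^-\neq c]$ and then rewriting the single-negative expectation as an average over $k$ distinct negatives before applying subadditivity).
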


The proof of Proposition~\ref{prop33} is given in the appendix.
This Proposition states that, in a setting similar to that of~\citet{saunshi2019theoretical}, on average, the $(k+1)$-way supervised classification loss is upper-bounded by the unsupervised loss (both with $N=1$ negative or $N > 1$ negatives), 
that contrastive learning algorithms actually minimize.
Therefore, these results give hints for the performances of the learned representation for downstream tasks.

Also, while~\citet{saunshi2019theoretical} only considers an unsupervised loss with $N = k$ negatives along with $(k+1)$-way tasks for evaluation, the quantities $N$ and $k$ are decoupled in Proposition~\ref{prop33}.
Furthermore, whenever $\rho$ is uniform, one has $\tau_N^+ = \sum_{c \in \mathcal{C}} \rho(c)^{N+1} = N_{\mathcal{C}}^{-N}$, which decreases to $0$ as $N \rightarrow +\infty$, so that 
a larger number of negatives $N$ makes $k/{(1-\tau_N^+)}$ smaller and closer to~$k$.
This provides a step towards a better understanding of what is actually done in practice with unsupervised contrastive learning. For instance, $N=65536$ negatives are used in~\citet{he2020momentum}.

While we considered a generic encoder $f$ and a generic setting in this Section, the next Section~\ref{subsec:conv} considers a more realistic setting of an unsupervised objective with a fixed available dataset, and the study of an \emph{explicit} algorithm for the training of $f$.

\section{Convergence of gradient descent for contrastive unsupervised learning}
\label{subsec:conv}

This section leverages results from~\citet{allen2019convergence} to provide convergence guarantees for gradient-descent based minimization of the contrastive training error, where the unsupervisedly trained encoder is an overparametrized deep neural network.

\paragraph{Deep neural network encoder.}

We consider a family of encoders $f$ defined as a deep feed-forward neural network following~\citet{allen2019convergence}. 
We quickly restate its structure here for the sake of completeness.
A deep neural encoder~$f$ is parametrized by matrices $A \in \R^{m \times d_x}$, $B \in \R^{d\times m}$ and $W_1, \dots , W_L\in \R^{m \times m}$ for some depth $L$. 
For an input $x \in \R^{d_x}$, the feed-forward output $y \in \R^d$ is given by
\begin{align*}
    g_{0} &= A x, \quad h_{0} = \phi(g_{0}), \quad g_{l} = W_l h_{l-1}, \quad  h_{l} = \phi(g_{l}) \quad \text{for} \quad l=1, \ldots, L, \\
    y &= B h_{L},
\end{align*}
where $\phi$ is the ReLU activation function. 
Note that the architecture can also include residual connections and convolutions, as explained in~\citet{allen2019convergence}.

We know from~\citet{allen2019convergence} that, provided a $\delta$-separation condition on the dataset $(x_i, y_i)$ for $i=1, \ldots, n$ with $\delta >0$ and sufficient overparametrization of the model ($m= \Omega \prth{\poly(n,L,\delta^{-1})\cdot d}$), the optimisation of the least-squares error $\frac 12 \sum_{i=1}^n \norm{\widehat y_i - y_i}_2^2$
using gradient descent provably converges to an arbitrarily low value $\epsilon > 0$, where $\widehat y_i = f(x_i)$ are the network outputs. 
Moreover, the convergence is linear i.e. the number of required epochs is $T = O(\log(1 / \eps))$, although involving a constant of order $\poly(n,L,\delta^{-1})$.
Although this result does not directly apply to contrastive unsupervised learning, we explain below how it can be adapted provided a few additional assumptions.

Ideally, we would like to prove a convergence result on the unsupervised objective defined in Equation~\eqref{L_un_k}. 
However, we need to define an objective through an explicitly given dataset so that it falls within the scope of~\citet{allen2019convergence}.
Regarding this issue, we assume in what follows that we dispose of a set of fixed triplets $(x, x^+, x^-) \in (\R^{d_x})^3$ we train on.

\paragraph{Objective function.}

Let us denote this fixed training set $\{ (x_i, x_i^+, x_i^-) \}_{i=1}^n$. 
Each element leads to an output $z_i = (f(x_i), f(x_i^+), f(x_i^-))$ by the encoder and we optimize the empirical objective
\begin{equation}
    \label{objective}
    \widehat{L}_{\mathrm{un}}(f) = \sum_{i=1}^n \softplus \big(f(x_i)^T (f(x_i^-) - f(x_i^+)) \big) = \sum_{i=1}^n \ell(z_i),
\end{equation}
where we introduced the loss function $\ell(z_i) = \ell(z_{i,1}, z_{i,2}, z_{i,3}) = \softplus (z_{i,1}^T (z_{i,3} - z_{i,2}))$ with $\zeta(x) = \log\prth{1 + e^x}$. 
Note that $\widehat{L}_{\mathrm{un}}(f) / n$ is the 
empirical counterpart of the unsupervised loss~\eqref{L_un_k}.
Our management of the set of training triplets 
can be compared to that of~\citet{wen2020convergence} who similarly fixes them in advance but uses multiple negatives and the same $x_i$ as a positive. 
However, two distinct encoders are trained therein, one for the reference sample $x_i$ and another for the rest. 
We consider here the more realistic case where a single encoder is trained.
Our approach also applies to multiple negatives, but we only use a single one here for simplicity.
We need the following data separation assumption from~\cite{allen2019convergence}.
\begin{assumption}
\label{assume:sep}
    We assume that all the samples $x \in \mathcal{X}_{\mathrm{data}} = \bigcup_{i=1}^n \{ x_i,x_i^+,x_i^- \}$ are normalized $\norm{x} = 1$
    and that there exists $\delta > 0$ such that $\norm{x - x'}_2 \geq \delta$ for any $x,x' \in \mathcal{X}_{\mathrm{data}}$.
\end{assumption}
Note that sampling the positives and negatives $x_i^+, x_i^-$ need not to be made through simple draws from the dataset. 
A common practice in contrastive learning~\citep{chen2020simple} is to use data augmentations, where we replace $x_i^\pm$ by $\psi(x_i^\pm)$ for an augmentation function $\psi$ also drawn at random. 
Such an augmentation can include, whenever inputs are color images, Gaussian noise, cropping, resizing, color distortion, rotation or a combination thereof, with parameters sampled at random in prescribed intervals.
The setting considered here allows the case where $x_i^\pm$ are actually augmentations (we won't write $\psi(x_i^\pm)$ but simply $x_i^\pm$ to simplify notations), provided that Assumption~\ref{assume:sep} is satisfied and that such augmentations are performed and fixed before training. Note that, in practice, the augmentations are themselves randomly sampled at each training iteration~\citep{chen2020simple}. 
Unfortunately, this would make the objective intractable and the convergence result we are about to derive does not apply in that case.

In order to apply the convergence result from~\citet{allen2019convergence}, we need to prove that the following gradient-Lipschitz condition
\begin{equation}
    \label{lipsmooth}
    \ell(z + z') \leq \ell(z) + \scprod{\nabla\ell(z)}{z'} + \frac{L_{\mathrm{smooth}}}{2}\norm{z'}^2
\end{equation}
holds for any $z, z' \in \R^{3d}$, for some constant $L_{\mathrm{smooth}} > 0$, where $\ell$ is the loss given by~\eqref{objective}.
However, as defined previously, $\ell$ does not satisfy~\eqref{lipsmooth} without extra assumptions.
We propose to bypass this problem by making the following additional assumption on the norms of the outputs of the encoder.
\begin{assumption}
    \label{assume}
    For each element $x\in \mathcal{X}_{\mathrm{data}}$ \textup, the output $z = f(x) \in \R^d$ satisfies
    \begin{equation*}
      \eta < \norm{z} < C  
    \end{equation*}
    during and at the end of the training of the encoder $f$\textup, for some constants $0< \eta < C < +\infty$.
\end{assumption}
In Section~\ref{sec:experiments}, we check experimentally on three datasets (see Figure~\ref{fig:norms} herein) that this assumption is rather realistic.
The lower bound $\eta > 0$ is necessary and used in Lemma~\ref{lem:super} below, while the upper bound $C$ is used in the next Lemma~\ref{lem:unsupervised-is-smooth}, which establishes the gradient-Lipschitz smoothness of the unsupervised loss $\ell$ and provides an estimation of $L_{\mathrm{smooth}}$.
\begin{lemma}
    \label{lem:unsupervised-is-smooth}
    Consider the unsupervised loss $\ell$ given by~\eqref{objective}, grant Assumption~\ref{assume} and define the set
    \begin{equation*}
        B^3 = \Big\{ z = (z_1, z_2, z_3) \in (\R^{d})^3 \; : \;  \max_{j=1,2,3} \norm{z_j}_2^2 \leq C^2 \Big\}
    \end{equation*}
    where $C > 0$ is defined in Assumption~\ref{assume}.
    Then, the restriction of $\ell$ to $B^3$ satisfies~\eqref{lipsmooth} with a constant $L_{\mathrm{smooth}} \leq 2 + 8 C^2 $.
\end{lemma}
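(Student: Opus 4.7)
The plan is to deduce the quadratic upper bound in~\eqref{lipsmooth} from a uniform bound on the operator norm of the Hessian $\nabla^2 \ell$ over the convex set $B^3$. Since $B^3$ is convex, such a Hessian bound together with a second-order Taylor expansion along the segment joining $z$ to $z+z'$ inside $B^3$ yields~\eqref{lipsmooth} for the restriction of $\ell$ to $B^3$. The hypothesis that each of the three blocks of $z$ lies in the Euclidean ball of radius $C$ is exactly what makes $B^3$ convex and bounded, which is what I need.

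To compute the Hessian, set $u(z) = z_1^T(z_3 - z_2)$ so that $\ell(z) = \zeta(u(z))$, and apply the chain rule to get $\nabla^2 \ell = \zeta''(u)\, \nabla u\, (\nabla u)^T + \zeta'(u)\, \nabla^2 u$. With respect to the partition of $\R^{3d}$ into the three blocks $z_1, z_2, z_3$, the gradient $\nabla u$ is the stacked vector $(z_3 - z_2,\, -z_1,\, z_1)$, while $\nabla^2 u$ is a constant $3\times 3$ block matrix of $d\times d$ blocks, with $-I_d$ in positions $(1,2)$ and $(2,1)$, $+I_d$ in positions $(1,3)$ and $(3,1)$, and zeros elsewhere.

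I then bound the two pieces separately using the elementary facts $0 < \zeta'(u) \leq 1$ and $0 < \zeta''(u) \leq 1$ that hold everywhere on $\R$. For the rank-one part, the triangle inequality combined with Assumption~\ref{assume} gives $\norm{\nabla u}_2^2 = \norm{z_3 - z_2}^2 + 2\norm{z_1}^2 \leq 4C^2 + 2C^2 = 6C^2$ on $B^3$, so this term contributes at most $6C^2$ to the operator norm. For the second term, a direct computation of $(\nabla^2 u)v$ on an arbitrary $v = (v_1, v_2, v_3) \in \R^{3d}$ yields $\norm{(\nabla^2 u)v}^2 = \norm{v_3 - v_2}^2 + 2\norm{v_1}^2 \leq 2\norm{v}^2$, hence $\norm{\nabla^2 u}_{\mathrm{op}} \leq \sqrt{2} \leq 2$. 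Summing gives $\norm{\nabla^2 \ell}_{\mathrm{op}} \leq 6C^2 + 2 \leq 8C^2 + 2$, which is the announced bound on $L_{\mathrm{smooth}}$.

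The only real obstacle is bookkeeping: fixing a stacking convention so that the block structure of $\nabla u$ and $\nabla^2 u$ is unambiguous, and making sure to use the per-block bound $\max_j \norm{z_j}^2 \leq C^2$ at the right places rather than the coarser $\norm{z}^2 \leq 3C^2$. Once that is in place, the whole estimate reduces to the two elementary bounds on the softplus derivatives above, together with the triangle inequality.
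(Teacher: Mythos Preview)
Your proof is correct and follows the same strategy as the paper: bound the operator norm of $\nabla^2\ell$ uniformly on the convex set $B^3$ and conclude via Taylor's theorem. The paper carries out the Hessian computation through the softmax parametrization $v=\softmax(z_1^Tz_2,\,z_1^Tz_3)$ and the bilinear pieces $\partial\cos_{i,j}$, whereas your factorization $\ell=\zeta\circ u$ with $u(z)=z_1^T(z_3-z_2)$ is a cleaner route to the same estimate (and in fact yields the sharper intermediate bound $\sqrt{2}+6C^2$, which you could tighten further using $\zeta''\le 1/4$).
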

The proof of Lemma~\ref{lem:unsupervised-is-smooth} is given in the appendix. 
Now, we can state the main result of this Section.
\begin{theorem}
    \label{theo1}
    Grant both Assumptions~\ref{assume:sep} and~\ref{assume}, let $\epsilon > 0$ and let $\widehat{L}_{\mathrm{un}}(f)$ be the loss given by~\eqref{objective}.
    Then, assuming that
    \begin{equation*}
        m \geq \Omega \Big( \frac{\poly(n,L,\delta^{-1}) \cdot d}{\epsilon} \Big), 
    \end{equation*}
    the gradient descent algorithm with a learning rate $\nu$ and a number of steps $T$ such that 
    \begin{equation*}
        \nu = \Theta \Big( \frac{d\delta}{\poly(n,L) \cdot m} \Big) \quad \text{and} \quad T = O \Big( \frac{\poly(n,L)}{\delta^2 \epsilon^2} \Big),
    \end{equation*}
    finds a parametrization of the encoder $f$ satisfying
    \begin{equation*}
        \widehat{L}_{\mathrm{un}}(f) \leq \epsilon.
    \end{equation*}
\end{theorem}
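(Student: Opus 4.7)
The plan is to reduce the statement to a direct application of the convergence theorem for gradient descent on overparametrized ReLU networks trained with a general gradient-Lipschitz objective, as given in~\citet{allen2019convergence}. That theorem operates on a pool of pairwise-separated inputs with a smooth loss applied to the concatenated network outputs; the work here is to recast $\widehat{L}_{\mathrm{un}}(f)$ into this template while accounting for the triplet coupling and verifying the two required hypotheses: data separation and loss smoothness.

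First, I would flatten the training data into $\mathcal{X}_{\mathrm{data}} = \{\tilde{x}_1, \ldots, \tilde{x}_N\}$ with $N \leq 3n$, the set of distinct inputs occurring across all triplets. Assumption~\ref{assume:sep} states exactly that these inputs are unit-normalized and pairwise $\delta$-separated, which is the input condition required by~\citet{allen2019convergence}. Writing $z_j = f(\tilde{x}_j)$ and $Z = (z_1, \ldots, z_N) \in \R^{Nd}$, the empirical objective becomes $\widehat{L}_{\mathrm{un}}(f) = \mathcal{L}(Z)$, where $\mathcal{L}(Z) = \sum_{i=1}^n \ell(z_{\alpha(i)}, z_{\beta(i)}, z_{\gamma(i)})$ and $(\alpha(i), \beta(i), \gamma(i))$ records the positions of the anchor, positive and negative inputs of the $i$-th triplet within $\mathcal{X}_{\mathrm{data}}$.

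Next, I would verify that $\mathcal{L}$ is gradient-Lipschitz in $Z$ on the product box $\{\|z_j\| \leq C\}^N$. Lemma~\ref{lem:unsupervised-is-smooth} gives this for each individual triplet term with constant $2 + 8C^2$; summing over the $n$ terms and absorbing the multiplicity of occurrences of each $z_j$ yields a global smoothness constant $L_{\mathrm{smooth}}(\mathcal{L}) = O(n(1+C^2))$, a quantity that fits inside the $\poly(n,L)$ factors appearing in the theorem's stated rates. Assumption~\ref{assume} then guarantees that $Z$ stays in the box along the training trajectory, so the smoothness holds for every iterate. With data separation and loss smoothness both in hand, the convergence theorem of~\citet{allen2019convergence} for smooth losses applies: for the stated lower bound on $m$, step size $\nu$, and iteration count $T$, gradient descent drives the objective below $\epsilon$, once $L_{\mathrm{smooth}}$ has been absorbed into the polynomial factors.

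The main obstacle is that the convergence argument in~\citet{allen2019convergence} is classically phrased for per-sample losses of the form $\sum_j \ell_j(f(\tilde{x}_j))$, whereas our $\mathcal{L}$ couples three network outputs per term. The reduction above defuses this by rewriting $\mathcal{L}$ as a single gradient-Lipschitz function of the concatenated output $Z$, which is all the machinery really needs: its core ingredients are quantitative control of the Jacobian of $Z$ with respect to the network parameters (depending only on the input data, the architecture, and the degree of overparametrization) together with smoothness of the outer loss. A further delicate point is maintaining Assumption~\ref{assume} along the full trajectory rather than merely at initialization: its upper bound $C$ is what Lemma~\ref{lem:unsupervised-is-smooth} relies on, while the lower bound $\eta$ is needed to keep the gradient of $\mathcal{L}$ with respect to $Z$ from becoming degenerate and stalling descent. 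Both bounds are postulated here rather than proved, and their empirical plausibility is the subject of Section~\ref{sec:experiments}.
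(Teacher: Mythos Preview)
Your reduction to the framework of~\citet{allen2019convergence} is the right starting point, and the smoothness verification via Lemma~\ref{lem:unsupervised-is-smooth} is correct. However, there is a genuine gap at the final step. For a general non-convex gradient-Lipschitz loss, Theorem~6 of~\citet{allen2019convergence} (the relevant result here) only guarantees that the gradient of the objective with respect to the network outputs becomes small, i.e.\ $\|\nabla_Z \mathcal{L}(Z)\| \leq \epsilon$. It does \emph{not} directly yield $\mathcal{L}(Z) \leq \epsilon$: for non-convex losses a small gradient may correspond to a saddle or a high-value critical point. Your sentence ``gradient descent drives the objective below $\epsilon$'' elides exactly this issue.

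The paper closes this gap with an additional argument (Lemma~\ref{lem:super}) that exploits the specific structure of $\ell$ together with the lower bound $\eta$ in Assumption~\ref{assume}. Writing $v = \softmax(z_1^\top z_2, z_1^\top z_3)$, one component of the output-gradient is $\|\nabla_{z_3}\ell\| = v_2 \|z_1\|$; since $\|z_1\| \geq \eta$, a small gradient forces $v_2 \leq \epsilon/\eta$, and then $\ell(z) = -\log(1-v_2) \leq 2\epsilon/\eta$. This is the actual role of $\eta$: it is not, as you suggest, to prevent the gradient from ``becoming degenerate and stalling descent,'' but rather to certify that any point with small output-gradient is in fact a near-global minimizer of $\ell$. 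Without this step the proof is incomplete.
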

The proof of Theorem~\ref{theo1} is given in the appendix.
Although it uses Theorem~6 from~\citet{allen2019convergence}, it is actually \emph{not} an immediate consequence of it.
Indeed, in our case, the Theorem~6 therein only allows us to conclude that $\| \nabla\widehat{L}_{\mathrm{un}}(f) \| \leq \epsilon$, where the gradient is taken w.r.t. the outputs of $f$. 
The convergence of the objective itself is obtained thanks to the following Lemma whose proof is given in the appendix.
\begin{lemma}
    \label{lem:super}
    Grant Assumption~\ref{assume} and assume that the parameters of the encoder $f$ are optimized so that $\| \nabla \widehat{L}_{\mathrm{un}}(f) \| \leq \epsilon$ with $\epsilon < \eta/2$, where $\eta$ is defined in Assumption~\ref{assume}. Then, for any $i=1, \ldots, n$, we have $\ell(z_i) \leq 2\epsilon/\eta$ where $z_i = (f(x_i), f(x_i^+), f(x_i^-))$.
\end{lemma}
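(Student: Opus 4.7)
The plan is to interpret $\nabla \widehat{L}_{\mathrm{un}}(f)$ as the gradient with respect to the collection of encoder outputs $z = (z_i)_{i=1}^n$ with $z_i = (z_{i,1}, z_{i,2}, z_{i,3})$, as discussed right after Theorem~\ref{theo1}. Then the assumption $\|\nabla\widehat{L}_{\mathrm{un}}(f)\| \le \epsilon$ implies in particular that every block of partial derivatives satisfies $\|\nabla_{z_{i,j}} \ell(z_i)\| \le \epsilon$, so the strategy is to pick one block that yields a clean lower bound in terms of a simple scalar quantity.

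First, I would introduce $u_i = z_{i,1}^\top(z_{i,3} - z_{i,2})$ so that $\ell(z_i) = \zeta(u_i) = \log(1+e^{u_i})$. A direct differentiation gives $\nabla_{z_{i,2}} \ell(z_i) = -\sigma(u_i)\, z_{i,1}$, where $\sigma = \zeta'$ is the sigmoid. Using $\|z_{i,1}\| = \|f(x_i)\| > \eta$ from Assumption~\ref{assume}, I obtain
\begin{equation*}
  \sigma(u_i) \, \eta \;<\; \sigma(u_i)\,\|z_{i,1}\| \;=\; \|\nabla_{z_{i,2}}\ell(z_i)\| \;\le\; \epsilon,
\end{equation*}
hence $\sigma(u_i) < \epsilon/\eta$. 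Note that the other two blocks $\nabla_{z_{i,1}}$ and $\nabla_{z_{i,3}}$ would produce looser bounds, so the block with respect to $z_{i,2}$ is the natural choice.

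Next, I would translate the bound on $\sigma(u_i)$ into a bound on $\zeta(u_i)$ itself. From $e^{u_i}/(1+e^{u_i}) < \epsilon/\eta$ and the assumption $\epsilon < \eta/2$, a short algebraic manipulation gives $1+e^{u_i} < 1/(1-\epsilon/\eta)$, so
\begin{equation*}
  \ell(z_i) \;=\; \log(1+e^{u_i}) \;<\; -\log\!\bigl(1 - \epsilon/\eta\bigr).
\end{equation*}
Applying the elementary inequality $-\log(1-x) \le 2x$ valid on $x \in [0,1/2]$ with $x = \epsilon/\eta \le 1/2$ yields $\ell(z_i) \le 2\epsilon/\eta$, as claimed.

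The only mildly delicate point is the first step, namely that we are legitimately extracting a single coordinate block from $\nabla\widehat{L}_{\mathrm{un}}$; this rests on the fact that $\widehat{L}_{\mathrm{un}} = \sum_i \ell(z_i)$ is separable in the $z_i$, so the blocks for distinct indices do not interact, and the bound $\epsilon$ on the full gradient norm dominates each block norm. Everything else is elementary convex calculus with the sigmoid and softplus.
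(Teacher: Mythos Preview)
Your argument is correct and matches the paper's proof: both extract the bound $\sigma(u_i)\leq\epsilon/\eta$ from a single block of the output-gradient combined with $\|z_{i,1}\|>\eta$, and then convert this into $\ell(z_i)=-\log(1-\sigma(u_i))\leq 2\epsilon/\eta$ via the same elementary inequality on $[0,1/2]$. The only cosmetic difference is that the paper uses the $z_{i,3}$-block (written as $\|\mathrm{loss}_{i,3}\|=v_2\|z_{i,1}\|$ with $v_2=\sigma(u_i)$), which is in fact identical to your $z_{i,2}$-block since $\nabla_{z_{i,3}}\ell(z_i)=\sigma(u_i)z_{i,1}=-\nabla_{z_{i,2}}\ell(z_i)$; so your remark that the $z_{i,3}$-block would be looser is not accurate, but this has no bearing on the proof.
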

This Lemma is crucial for proving Theorem~\ref{theo1} as it allows to show, in this setting, that the reached critical point is in fact a global minimum.

A natural idea would be then to combine Theorem~\ref{theo1} with Proposition~\ref{prop33} in order to prove that gradient descent training of the encoder using the unsupervised contrastive loss helps to minimize the supervised loss.
This paper makes a step towards such a result, but let us stress that it requires much more work, to be considered in future papers, the technical problems to be addressed being as follows.
Firstly, the result of Theorem~\ref{theo1} applies to $\widehat{L}_{\mathrm{un}}(f)$ and cannot be directly extrapolated on $L_{\mathrm{un}}(f)$.
Doing so would require a sharp control of the generalization error, 
while Theorem~\ref{theo1} is about the training error only.
Secondly, Assumption~\ref{assume:sep} requires that all samples are separated and, in particular, distinct.
This cannot hold when the objective is defined through an expectation as we did in Section~\ref{sec:sec3}. 
Indeed, it would be invalidated simply by reusing a sample in two different triples.

\section{Experiments} 
\label{sec:experiments}

In this section, we report experiments that illustrate our theoretical findings.

\paragraph{Datasets and Experiments.}

We use a small convolutional network as encoder on MNIST ~\citep{lecun-mnisthandwrittendigit-2010} and FashionMNIST~\citep{xiao2017/online}, and VGG-16~\citep{Simonyan15vgg} on CIFAR-10~\citep{krizhevsky2009learning}.
Experiments are performed with PyTorch~\citep{NEURIPS2019_9015pytorch}. 

\paragraph{Results.} 

Figure~\ref{fig:lem31} provides an illustration of Lemma~\ref{lem81},
where we display the values of $L_{\mathrm{un}}$ (i.e., $L^N_{\mathrm{un}}$ with $N=1$) and $L^{\mu}_{\mathrm{sup}}(f, \mathcal{C})$ along training iterations over $5$ separate runs (and their average).
We observe that Inequality~\eqref{ineq31} is satisfied on these experiments, even when the $\log N_{\mathcal{C}}$  term is discarded. 
Moreover, both losses follow a similar trend. 
Figure~\ref{fig:lem32} illustrates Lemma~\ref{lem32} for several values of $N$. 
Once again, we observe that both losses behave similarly, and that  Inequality \eqref{ineq32} seems to hold even without the $1/p^{\rho}_{cc}$ term (removed for these displays).

\begin{figure}[h]
    \centering
    \includegraphics[width=\textwidth]{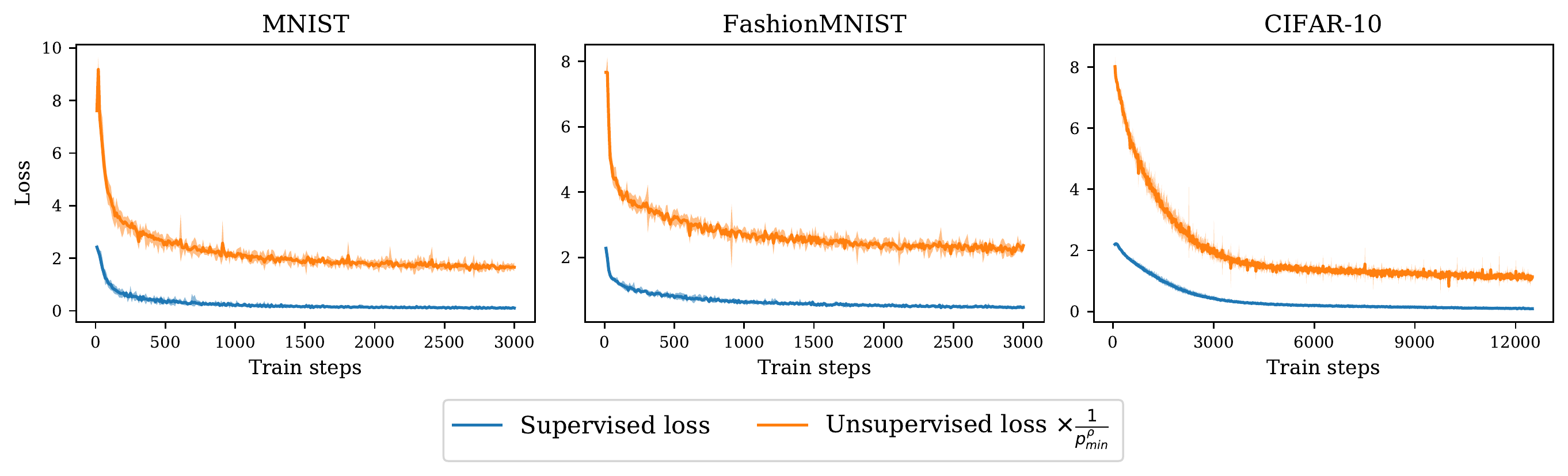}
    \caption{Illustration of Lemma~\ref{lem81}: we observe that Inequality~\eqref{ineq31} is satisfied on these examples, even without the $\log N_{\mathcal{C}}$ term, and that both losses behave similarly (5 runs are displayed together with their average).}
    \label{fig:lem31}
\end{figure}

\begin{figure}[h!]
    \centering
    \includegraphics[width=\textwidth]{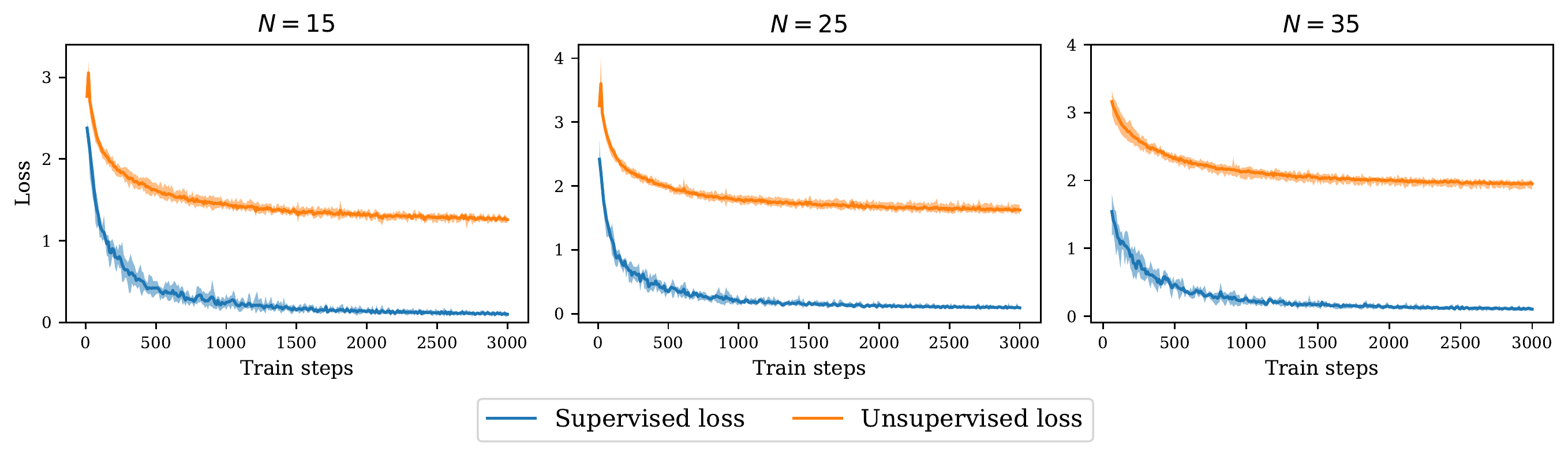} 
    \caption{Illustration of Lemma~\ref{lem32} with $N=15,25,35$ on MNIST. We observe again that both the unsupervised and supervised losses behave similarly and that Inequality \eqref{ineq32} is satisfied in these experiments, even without the $1/p^{\rho}_{cc}$ factor (5 runs are displayed together with their average).}
    \label{fig:lem32}
\end{figure}

Finally, Figure~\ref{fig:norms} displays the minimum and maximum Euclidean norms of the outputs of the encoder along training. 
On these examples, we observe that one can indeed assume these norms to be lower and upper bounded by constants, as stated in Assumption~\ref{assume}.

\begin{figure}[h]
    \centering
    \includegraphics[width=\textwidth]{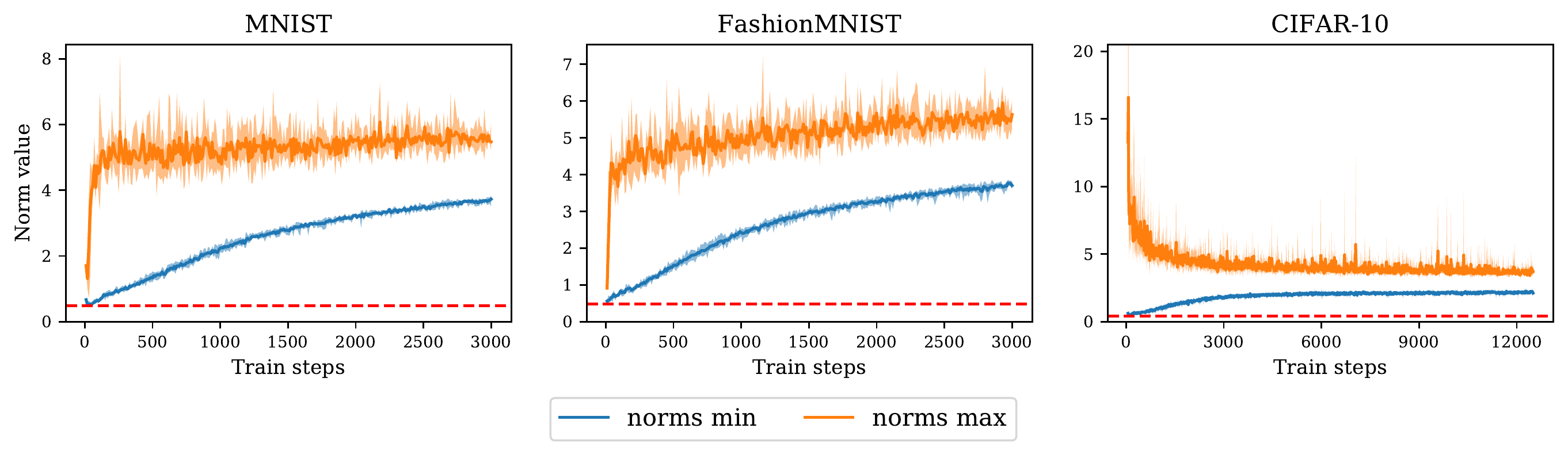}
    \caption{Minimum and maximum Euclidean norms of the outputs of the encoder along  contrastive unsupervised training. We observe that Assumption~\ref{assume} is satisfied on these examples (5 runs are displayed together with their average), the dashed line shows that the minimum norms are away from $0$ even in the early iterations.}
    \label{fig:norms}
\end{figure}

\section{Conclusion}

This work provides extensions to previous results on contrastive unsupervised learning, in order to somewhat improve the theoretical understanding of the performance that is empirically observed with pre-trained encoders used for subsequent supervised task.
The main hindrance to tighter bounds in Section~\ref{sec:sec3} is the blind randomness of negative sampling, which is unavoidable in the unsupervised setting. 
Section~\ref{subsec:conv} explains how recent theoretical results about gradient descent training of overparametrized deep neural networks can be used for unsupervised contrastive learning,
and concludes with an explanation of why combining the results from Sections~\ref{sec:sec3} and~\ref{subsec:conv} requires many extra technicalities to be considered in future works.  
Let us conclude by stressing, once again, our motivations for doing this: unsupervised learning theory is much less developed than supervised learning theory, and recent empirical results (see  Section~\ref{sec:introduction}) indicate that some forms of contrastive learning enable the learning of powerful representations without supervision. 
In many fields of application, labels are too difficult, too expensive or too invasive to obtain (in medical applications, see for instance~\citet{ching2018opportunities}).
We believe that a better understanding of unsupervised learning is therefore of utmost importance. 

\paragraph{Acknowledgements.} 

This work was funded in part by the French government under management of Agence Nationale de la Recherche as part of the ``Investissements d’avenir'' program, reference ANR-19-P3IA-0001 (PRAIRIE 3IA Institute).
Yiyang Yu was supported by grants from Région Ile-de-France.

\section{Proofs for Section~\ref{sec:sec3}}

Apart from the similarity between the unsupervised and supervised loss, the proof of Lemma~\ref{lem81} uses properties of log-sum-exp.

\begin{proof}[Proof of Lemma~\ref{lem81}]
We first rewrite the unsupervised loss as:
\[L_{\mathrm{un}}(f) = \E_{(x, x^+) \sim \mathcal{D}_{\mathrm{sim}} , x^- \sim \mathcal{D}_{\mathrm{neg}}} \log \prth{1 + \exp \prth{ f(x)^T (f(x^-) - f(x^+)) }}  \]
where we recognize the $\softplus$ function $\softplus(x) = \log(1 + e^x)$. We start by using Jensen's inequality
\begin{align*}
L_{\mathrm{un}}(f) &= \E_{\substack{(x, x^+) \sim \mathcal{D}_{\mathrm{sim}} \\ x^- \sim \mathcal{D}_{\mathrm{neg}}}} \sqprth{\softplus  \prth{ f(x)^T (f(x^-) - f(x^+)) }} \\
&\geq \E_{c,c^- \sim \rho, x \sim \mathcal{D}_c} \sqprth{\softplus \prth{ f(x)^T (\mu_{c^-} - \mu_c) }} \\
&\geq p^\rho_{\mathrm{min}} \E_{c \sim \rho, x \sim \mathcal{D}_c} \sqprth{\max_{c^-} \softplus \prth{ f(x)^T (\mu_{c^-} - \mu_c) }} \\
&= p^\rho_{\mathrm{min}} \E_{c \sim \rho, x \sim \mathcal{D}_c} \sqprth{\max_{c^-} \lse \prth{0,  f(x)^T (\mu_{c^-} - \mu_c) }} \\
&\geq p^\rho_{\mathrm{min}} \prth{\E_{c \sim \rho, x \sim \mathcal{D}_c} \sqprth{\lse \prth{ f(x)^T (\mu_{c_1} - \mu_c) , \dots,  f(x)^T (\mu_{c_{N_{\mathcal{C}}}} - \mu_c) }} - \log N_{\mathcal{C}}}\\
&= p^\rho_{\mathrm{min}} \prth{L_{\mathrm{sup}}^{\mu}(f, \mathcal{C}) - \log N_{\mathcal{C}}}
\end{align*}
where we have used properties of the log-sum-exp function
\[\max(x_1, \dots, x_n) \leq \lse (x_1, \dots, x_n) \leq \max (x_1, \dots, x_n) + \log n,\]
the fact that $\lse$ is non-negative whenever one of its arguments is, and for $x \in \R^{2n}$ we have
\[\lse (x) = \lse ( \lse(x_1, x_2), \dots, \lse(x_{2n-1}, x_{2n})) \leq \max_{j = 1, \dots, n}\lse(x_{2j-1}, x_{2j}) + \log n.\]
\end{proof}

The proof of Lemma~\ref{lem32} considers the sample draws where all classes are represented.

\begin{proof}[Proof of Lemma~\ref{lem32}]
Let $I \in [N_{\mathcal{C}}]^N$ the random vector of classes for each negative sample ($I \sim \rho^{\otimes N}$) and let $J$ be the set of represented classes i.e. $J = \braces{I_j \:\: \middle| \:\: j \in [N]}$. We have, again with Jensen's inequality
\begin{align*}
L^N_{\mathrm{un}}(f) &= \E_{x,x^+,x_1^-, \dots,x_N^-}\sqprth{ \lse \prth{0, f(x)^T (f(x_1^-) - f(x^+)) , \dots , f(x)^T (f(x_N^-) - f(x^+)) }} \\
&\geq \E_{c\sim \rho, I\sim \rho^{\otimes N}, x \sim \mathcal{D}_c} \sqprth{ \lse \prth{0, f(x)^T (\mu_{I_1} - \mu_c) , \dots , f(x)^T (\mu_{I_N} - \mu_c) }} \\
&\geq \Proba \prth{ \abs{J} = N_{\mathcal{C}}} \E_{\substack{c\sim \rho \\ I\sim \rho^{\otimes N} \\ x \sim \mathcal{D}_c}}\sqprth{ \lse \prth{0, f(x)^T (\mu_{I_1} - \mu_c) , \dots , f(x)^T (\mu_{I_N} - \mu_c)} \mid \abs{J} = N_{\mathcal{C}}} \\
&\geq \Proba \prth{ \abs{J} = N_{\mathcal{C}}} L^\mu_{\mathrm{sup}}(f, \mathcal{C}),
\end{align*}
where we used that for $\mathcal{S} \subset [n]$ and $x \in\R^n$ we have $\lse(x_{\mathcal{S}}) \leq \lse(x)$ with $x_{\mathcal{S}}$ the restriction of $x$ to the indices in $\mathcal{S}$. Finally, we have $\Proba \prth{ \abs{J} = N_{\mathcal{C}}} = p^\rho_{cc}(N)$.
\end{proof}

We restate Proposition~\ref{prop33} for cases $N=1$ and $N>1$. The proof uses Jensen's inequality and the uniformity of $\rho$.
\begin{proposition}[3.3 (restated)]
Consider the unsupervised loss $L_{\mathrm{un}}^N(f)$ from Equation~\eqref{L_un_k} with $N$ negative samples. Assume that $\rho$ is uniform over $\mathcal C$ and that $2 \leq k+1 \leq N_{\mathcal{C}}$.
Then, 
\begin{enumerate}
    \item[(1)] any encoder function $f: \mathcal{X} \to \mathbb{R}^d$ satisfies 
    \begin{equation*}
    L_{\mathrm{sup},k}(f) \leq L_{\mathrm{sup}, k}^\mu(f) \leq \frac{k}{1-\tau^+} \prth{L_{\mathrm{un}}(f) - \tau^+}     
    \end{equation*}
    with $\tau^+ = \Proba_{c, c' \sim \rho^2}\prth{c=c'}$\textup, where $L_{\mathrm{un}}(f)$ is the unsupervised loss from Equation~\eqref{L_un_k} with $N = 1$ negative sample;
    \item[(2)] more generally, 
    \begin{equation*}
        L_{\mathrm{sup},k}(f) \leq L_{\mathrm{sup}, k}^\mu(f) \leq \frac{k}{1-\tau_N^+} \prth{L_{\mathrm{un}}^N(f) - \tau_N^+ \log(N+1)}
    \end{equation*}
    with $\tau_N^+=\Proba(c_i = c, \forall i \mid c \sim \rho, (c_1, \cdots, c_N) \sim \rho^N)$, and where $L_{\mathrm{un}}^N(f)$ is the unsupervised loss from Equation~\eqref{L_un_k}.
\end{enumerate}
\end{proposition}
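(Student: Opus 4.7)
The plan is to lower-bound $L_{\mathrm{un}}^N(f)$ by a quantity involving $L_{\mathrm{sup},k}^\mu(f)$ plus an additive ``class-collision'' term, and then rearrange to obtain the claimed upper bound. The first step is a Jensen pull-in: since $\log(1 + \sum_i e^{\cdot})$ is convex in its arguments and $f(x)^\top(f(x_i^-) - f(x^+))$ is linear in $(f(x^+), f(x_1^-), \ldots, f(x_N^-))$, I condition on the latent classes $c, c_1, \ldots, c_N$ (holding $x$ fixed) and apply Jensen's inequality to replace each encoder output by its class mean, yielding
\begin{equation*}
L_{\mathrm{un}}^N(f) \geq \E_{c, c_1, \ldots, c_N,\, x \sim \mathcal{D}_c} \sqprth{\log\prth{1 + \sum_{i=1}^N e^{f(x)^\top(\mu_{c_i} - \mu_c)}}},
\end{equation*}
with $\mu_c = \E_{x \sim \mathcal{D}_c}[f(x)]$ as in Equation~\eqref{Wmu}.

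Next I split this expectation over the ``all-collision'' event $A = \{c_1 = \cdots = c_N = c\}$, whose probability is exactly $\tau_N^+$. On $A$ every exponent vanishes, so the integrand equals $\log(N+1)$. On $A^c$ I set $I = \min\{i : c_i \neq c\}$ and discard the remaining nonnegative summands to obtain the pointwise bound $\log(1 + \sum_i e^{y_i}) \geq \log(1 + e^{y_I})$. Because $\rho$ is uniform, a direct computation gives that, conditionally on $A^c$ and on $c$, the class $c_I$ is uniform on $\mathcal{C} \setminus \{c\}$ and is independent of $x$. The corresponding conditional expectation is therefore exactly the $2$-way mean-classifier loss $L_{\mathrm{sup},1}^\mu(f)$, and combining the two cases,
\begin{equation*}
L_{\mathrm{un}}^N(f) \geq \tau_N^+ \log(N+1) + (1-\tau_N^+)\, L_{\mathrm{sup},1}^\mu(f).
\end{equation*}

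Finally, the elementary fact $\prod_j(1+a_j) \geq 1 + \sum_j a_j$ for $a_j \geq 0$ gives $\log(1 + \sum_j a_j) \leq \sum_j \log(1 + a_j)$; applying this inside $L_{\mathrm{sup}}^\mu(f, \mathcal{T})$ and using that uniform drawing of $\mathcal{T}$ followed by uniform $c \in \mathcal{T}$ and uniform $c' \in \mathcal{T} \setminus \{c\}$ is the same as uniformly drawing an ordered pair of distinct classes, I obtain $L_{\mathrm{sup},k}^\mu(f) \leq k\, L_{\mathrm{sup},1}^\mu(f)$. Substituting into the previous display and rearranging yields part~(2); part~(1) is the specialization $N=1$.

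I expect the conditional identification in the second step to be the main obstacle: one must check that, given $A^c$, the index $I$ picks out a class genuinely uniform on $\mathcal{C} \setminus \{c\}$, which uses uniformity of $\rho$ essentially, and does this without introducing spurious coupling with $x$. A secondary subtlety in the first step is that Jensen must be applied on the $(x^+, x_i^-)$-block only with $x$ held fixed, because the integrand is not jointly convex in $f(x)$ together with the remaining encoder outputs; the expectation over $x$ is then taken in a separate outer layer.
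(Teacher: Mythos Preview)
Your proof is correct and follows essentially the same route as the paper: Jensen's inequality to replace $f(x^+),f(x_i^-)$ by class means, a split on the all-collision event, and the inequality $\log(1+\sum_j a_j)\le \sum_j\log(1+a_j)$ together with uniformity of $\rho$ to relate the $2$-way and $(k{+}1)$-way mean-classifier losses. The only organizational difference is that you factor through $L_{\mathrm{sup},1}^\mu$ and then prove $L_{\mathrm{sup},k}^\mu\le k\,L_{\mathrm{sup},1}^\mu$, whereas the paper goes directly from the single-negative conditional expectation to $\tfrac{1}{k}L_{\mathrm{sup},k}^\mu$; your first-distinct-index argument on $A^c$ also makes rigorous a step the paper merely asserts.
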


\begin{proof}[Proof of Proposition~\ref{prop33}]
Let's start with (1). By Jensen's inequality, then use $\log = \log_2$ without loss of generality, 
and split the expectation into cases $c^- \neq c$ and $c^- = c$,
\begin{align*}
    L_{\mathrm{un}}(f) &= \E_{(c, c^-) \sim \rho^2 } \E_{x,x^+ \sim \mathcal{D}_c, x^- \sim \mathcal{D}_{c^-}} \sqprth{ \log \prth{  1 + \exp \prth{f(x)^T \prth{f(x^-)-f(x^+)}}}} \\
    &\geq \E_{(c, c^-) \sim \rho^2 , x \sim \mathcal{D}_{c}} \sqprth{ \log \prth{ 1 + \exp \prth{f(x)^T \prth{\mu_{c^-} - \mu_c}}}} \\
    & = (1-\tau^+) \E_{c \sim \rho, x \sim \mathcal{D}_{c}} \E_{c^- \sim \rho} \sqprth{ \log \prth{  1 + \exp \prth{f(x)^T \prth{\mu_{c^-} - \mu_c}}} \middle | c^- \neq c} + \tau^+.
\end{align*}

Let us write explicitly the uniform distribution $\rho$ on $\mathcal{C}$. On the one hand,
\begin{align*}
&\E_{c^- \sim \rho} \sqprth{ \log \prth{ 1 + \exp \prth{f(x)^T \prth{\mu_{c^-} - \mu_c}}} \middle | c^- \neq c } \\ 
=& \sum_{c^- \in \mathcal{C} \backslash \{c\}} \frac{1}{N_{\mathcal{C}} -1} \log \prth{ 1 + \exp \prth{f(x)^T \prth{\mu_{c^-} - \mu_c}}},
\end{align*}
on the other hand,
And this is for every $c^- \in \mathcal{C} \backslash \{c\}$. We rearrange the double sum according to $c^-$

Hence, using the uniformity of $\rho$,
\begin{align*}
&\E_{c^- \sim \rho} \sqprth{ \log \prth{ 1 + \exp \prth{f(x)^T \prth{\mu_{c^-} - \mu_c}}} \middle | c^- \neq c }\\
=& \frac{1}{k} \E_{{c_1,\dots,c_k} \sim \rho^{\otimes k}} \sqprth{ \sum_{i=1}^k \log \prth{  1 + \exp \prth{f(x)^T \prth{\mu_{c_i} - \mu_c}}} \middle | \braces{c, c_1,\dots,c_k} \text{distinct} }\\
\geq& \frac{1}{k} \E_{{c_1,\dots,c_k} \sim \rho^{\otimes k}} \sqprth{ \log \prth{1 + \sum_{i=1}^k \exp \prth{f(x)^T \prth{\mu_{c_i} - \mu_c}}} \middle | \braces{c, c_1,\dots,c_k} \text{distinct}} .
\end{align*}

That means we have
\begin{align*}
L_{\mathrm{un}}(f) &\geq \frac{1-\tau^+}{k} \E_{\substack{c \sim \rho, x \sim \mathcal{D}_{c} \\ {c_1,\dots,c_k} \sim \rho^{\otimes k}}} \sqprth{ \log \prth{1 + \sum_{i=1}^k \exp \prth{f(x)^T \prth{\mu_{c_i} - \mu_c}}} \middle | \braces{c, c_1,\dots,c_k} \text{distinct} } + \tau^+ \\
&= \frac{1-\tau^+}{k} \E_{\mathcal{T} \sim \mathcal{D}^{k+1}} \E_{(x,c) \sim \mathcal{D}_{\mathcal{T}}} \sqprth{ -\log \prth{ \frac{\exp(f(x)^T \mu_c)}{\exp(f(x)^T \mu_c) + \sum_{\substack{c^- \in \mathcal{T}\\ c^- \neq c}} \exp \prth{f(x)^T \mu_{c^-}}}}} + \tau^+ \\
&= \frac{1-\tau^+}{k} L_{\mathrm{sup}, k}^\mu(f) + \tau^+.
\end{align*}

As for (2), again by Jensen's inequality, and split the expectation into cases $c_i^- = c, \forall i$ and $\exists c_i^- \neq c$, 
\begin{align*}
    L_{\mathrm{un}}^N(f) &= \E_{(c, c_i^-) \sim \rho^{N+1}} \E_{x,x^+ \sim \mathcal{D}_c, x_i^- \sim \mathcal{D}_{c_i^-}} \sqprth{ \log \prth{ 1 + \sum_{i=1}^N \exp \prth{f(x)^T \prth{f(x_i^-)-f(x^+)}}}} \\
    &\geq \E_{(c, c_i^-) \sim \rho^{N+1} , x \sim \mathcal{D}_{c}} \sqprth{ \log \prth{ 1 + \sum_{i=1}^N \exp \prth{f(x)^T \prth{\mu_{c_i^-} - \mu_c}}}} \\
    & = (1-\tau_N^+) \E_{\substack{c \sim \rho\\ x \sim \mathcal{D}_{c}}} \E_{c_i^- \sim \rho^N} \sqprth{\log \prth{ 1 + \sum_{i=1}^N \exp \prth{f(x)^T \prth{\mu_{c^-} - \mu_c}}} \middle|  \exists c_i^- \neq c} + \tau_N^+ \log(N+1)
\end{align*}
with $$\tau_N^+=\Proba(c_i = c, \forall i \mid c \sim \rho, c_i \sim \rho^N) = \sum_{c \in \mathcal{C}} \rho(c)^{N+1} = N_{\mathcal{C}}^{-N}.$$
Considering the fact that
\begin{align*}
\E_{c_i^- \sim \rho^N} &\sqprth{\log \prth{ 1 + \sum_{i=1}^N \exp \prth{f(x)^T \prth{\mu_{c^-} - \mu_c}}} \middle|  \exists c_i^- \neq c} \geq\\ &\E_{c^- \sim \rho} \sqprth{ \log \prth{ 1 + \exp \prth{f(x)^T \prth{\mu_{c^-} - \mu_c}}} \middle | c^- \neq c }, 
\end{align*}
then by similar computations as in (1), we have
\[ L_{\mathrm{un}}^N(f) \geq \frac{1-\tau_N^+}{k} L_{\mathrm{sup}, k}^\mu(f) + \tau_N^+ \log(N+1). \]
\end{proof}

\section{Proofs for Section~\ref{subsec:conv}}

Let us first prove that under Assumption~\ref{assume}, the objective is gradient-Lipschitz w.r.t. the network outputs.

\begin{lemma}[Lemma~4.1]
	\label{lem34rest}
Consider the unsupervised loss $\ell$ given by~\eqref{objective}, grant Assumption~\ref{assume} and define the set
    \begin{equation*}
        B^3 = \Big\{ z = (z_1, z_2, z_3) \in (\R^{d})^3 \; : \;  \max_{j=1,2,3} \norm{z_j}_2^2 \leq C^2 \Big\}
    \end{equation*}
where $C > 0$ is defined in Assumption~\ref{assume}.
Then, the restriction of $\ell$ to $B^3$ satisfies~\eqref{lipsmooth} with a constant $L_{\mathrm{smooth}} \leq 2 + 8 C^2 $.	
\end{lemma}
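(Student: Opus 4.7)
The plan is to prove the gradient-Lipschitz inequality~\eqref{lipsmooth} by bounding the operator norm of the Hessian $\nabla^2 \ell(z)$ uniformly over $z \in B^3$; the usual second-order Taylor argument then yields~\eqref{lipsmooth} with $L_{\mathrm{smooth}} = \sup_{z \in B^3} \|\nabla^2 \ell(z)\|_{\mathrm{op}}$, so the problem reduces to a finite-dimensional Hessian estimate.

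First, I would write $\ell$ as the composition $\ell(z) = \zeta(g(z))$ with the bilinear form $g(z) = z_1^T(z_3 - z_2)$, and apply the chain rule to get
\[
\nabla^2 \ell(z) = \zeta''(g(z))\, \nabla g(z)\, \nabla g(z)^T + \zeta'(g(z))\, \nabla^2 g(z).
\]
The scalar factors satisfy $\zeta'(x) = \sigma(x) \in (0,1)$ and $\zeta''(x) = \sigma(x)(1 - \sigma(x)) \in (0, 1/4]$ globally on $\mathbb{R}$, so all the $z$-dependence is carried by the two geometric factors $\nabla g(z)$ and $\nabla^2 g(z)$.

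Second, I would compute $\nabla g(z) = (z_3 - z_2,\, -z_1,\, z_1) \in \mathbb{R}^{3d}$, and use the definition of $B^3$ together with the triangle inequality to bound $\|\nabla g(z)\|^2 = \|z_3 - z_2\|^2 + 2\|z_1\|^2 \leq 4 C^2 + 2 C^2 = 6 C^2$. The Hessian $\nabla^2 g$ is a constant $3d \times 3d$ symmetric block matrix whose only nonzero blocks are $-I_d$ in positions $(1,2), (2,1)$ and $+I_d$ in positions $(1,3), (3,1)$; a direct eigenvalue computation, reducing the equation $\nabla^2 g\, v = \lambda v$ with $v = (v_1, v_2, v_3)$ to the three scalar relations $v_3 - v_2 = \lambda v_1$, $-v_1 = \lambda v_2$, $v_1 = \lambda v_3$, gives $\lambda^2 = 2$ and hence $\|\nabla^2 g\|_{\mathrm{op}} = \sqrt{2}$.

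Combining the two estimates, $\|\nabla^2 \ell(z)\|_{\mathrm{op}} \leq \tfrac{1}{4} \cdot 6 C^2 + 1 \cdot \sqrt{2} = \tfrac{3}{2} C^2 + \sqrt{2}$, which is comfortably bounded above by $2 + 8 C^2$ and yields the announced smoothness constant. There is no deep obstacle here, since $\zeta$ has globally bounded first and second derivatives; the only substantive role of Assumption~\ref{assume} in this lemma is that the upper bound $C$ on $\|f(x)\|$ keeps $\nabla g(z)\,\nabla g(z)^T$ uniformly bounded on $B^3$, which is exactly what prevents the outer quadratic term in the chain-rule decomposition from blowing up when the encoder outputs grow large.
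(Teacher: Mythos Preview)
Your proof is correct. Both you and the paper bound $\sup_{z\in B^3}\|\nabla^2\ell(z)\|_{\mathrm{op}}$ and then invoke the standard Taylor argument (valid here since $B^3$ is convex), so the overall strategy is the same. The difference lies in the decomposition of the Hessian: the paper writes $\ell$ in its softmax form, introduces $v=\softmax(z_1^Tz_2,\,z_1^Tz_3)$, and expands $\nabla^2\ell$ as $(v_1-1)\,\partial^2\cos_{1,2}+v_2\,\partial^2\cos_{1,3}+(\partial\cos_{1,2}\;\partial\cos_{1,3})\,\partial v$, then bounds each piece to reach $2+8C^2$. Your route, writing $\ell=\zeta\circ g$ with $g(z)=z_1^T(z_3-z_2)$ and applying the chain rule once, is more direct for the single-negative case and gives the sharper constant $\tfrac{3}{2}C^2+\sqrt{2}$. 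The paper's softmax parametrization, on the other hand, is already set up in the right coordinates to handle $N>1$ negatives, where $\ell$ is no longer the softplus of a single bilinear form and your decomposition would need to be redone.
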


\begin{proof}
We will prove this result by bounding the norm of the Hessian matrix.

Let us write the gradient of $\ell(z)$ with respect to $z$ first. We have $z \in \R^{3d}$. For ease of writing, we define the matrices $A_1, A_2, A_3 \in \R^{3d \times d}$ as 
\[A_1 = \begin{pmatrix} I_d \\ 0_d \\ 0_d\end{pmatrix} \quad A_2 = \begin{pmatrix} 0_d \\ I_d \\ 0_d\end{pmatrix} \quad A_3 = \begin{pmatrix} 0_d \\ 0_d \\ I_d\end{pmatrix} \quad \]
where $I_d, 0_d \in \R^{d\times d}$ are the identity and zero matrix respectively. With this notation, we have $z_i = A_i^T z$ for $i=1,2,3$ the three contiguous thirds of $z$'s coordinates.

Our purpose is to compute
\[\frac{\partial}{\partial z} \ell(z) = \frac{\partial}{\partial z}\sqprth{- \log \prth{\frac{\exp\prth{z_1^T z_2}}{\exp\prth{z_1^T z_2} + \exp\prth{z_1^T z_3} }}}. \]

Denote $\cos_{i,j} = z_i^T z_j$, we can now compute for $i,j \in \braces{1,2,3}$ ($i \neq j$)
\[\frac{\partial }{\partial z} \cos_{i,j} = \prth{A_i A_j^T + A_j A_i^T } z  =: \partial \cos_{i,j} \in \R^{3d}.\]

Now, denote $v = \softmax\prth{\cos_{1,2}, \cos_{1,3}} \in \R^2$, we can write
\[\frac{\partial}{\partial z} \ell (z) = (v_1 - 1)\partial \cos_{1,2} + v_2 \partial \cos_{1,3}.\]

We proceed with the following computation
\[\frac{\partial^2}{\partial z^2} \cos_{i,j} = A_i A_j^T + A_j A_i^T, \]
which we will denote simply as $\partial^2 \cos_{i,j}$. Before we get the Hessian of loss, we still need to compute
\[\partial v:= \frac{\partial v}{\partial z} = (\diag(v) - vv^T)\begin{pmatrix} \partial \cos_{1,2}^T \\ \partial \cos_{1,3}^T \end{pmatrix} \in \R^{2\times 3d}. \]

Now we can write
\[\frac{\partial^2}{\partial z^2} \ell(z) = (v_1 - 1)\partial^2 \cos_{1,2} + v_2 \partial^2 \cos_{1,3} + \begin{pmatrix}\partial \cos_{1,2} & \partial \cos_{1,3}\end{pmatrix} \partial v.\]

We can now estimate the norm of this matrix which will provide an estimation for the Lipschitz constant.

We find that
\[\norm{\partial\cos_{i,j}} \leq 2\max \prth{\norm{z_i}, \norm{z_j}},\]
keeping in mind that the matrix $\diag(v) - vv^T$ has norm at most $1/2$, this leads to
\[\norm{\begin{pmatrix}\partial \cos_{1,2} & \partial \cos_{1,3}\end{pmatrix} \partial v} = 8\max_{i,j} \prth{\norm{z_i}\norm{z_j}}.\]

We have also that $\norm{\partial^2 \cos_{i,j}} = 1$.

All in all, we have $\norm{\frac{\partial^2}{\partial z^2} \ell(z)} = 2 + 8 \max_{i,j} \prth{\norm{z_i}\norm{z_j}}$. Recalling that we restricted $\R^{3d}$ so that we have $\max_i \norm{z_i} \leq C$ 
the result follows.
\end{proof}

Theorem~\ref{theo1} is actually obtained in two steps. First, Theorem 6 from~\citet{allen2019convergence} allows us to obtain that the gradient of the objective $\nabla \widehat{L}_{\mathrm{un}}(f)$ with respect to the network outputs reaches arbitrarily low values. Then, combining this with Assumption~\ref{assume}, this result can be extended into the objective itself.

Following appendix A of~\citet{allen2019convergence}, we need to define the $\loss$ vectors for our model. These are originally defined as $\loss_i = y_i - y_i^*$ ($y_i$ and $y_i^*$ are respectively the output and label corresponding to an input $x_i$ from the dataset) for the $\ell^2$ loss. More generally, for a network output $z_i$, they are defined as
\[\loss_i = \nabla_z \ell(z_i).\]

Following the unsupervised training protocol, samples are fed into the network three at a time $x, x^+$ and $x^-$. 
Let us denote $\theta$ the parameters of the network $f$, for a triplet $(x_i, x_i^+, x_i^-)$, the trick is to write:
\[\frac{\partial}{\partial \theta} \ell (z_i) = \frac{\partial z}{\partial \theta} \underbrace{\frac{\partial}{\partial z} \ell (z_i)}_{\loss}\]
with $z_i$ the concatenation of $f(x_i), f(x_i^+), f(x_i^-)$.

By denoting $(x_1, x_2, x_3) = (x_i, x_i^+, x_i^-)$, the previous writing is equivalent to
\[\sum_{j=1}^3 \frac{\partial f(x_j)}{\partial \theta} A_j^T \frac{\partial}{\partial z} \ell (z_i)\]
and by letting $\loss_{i,j} = A_j^T \frac{\partial}{\partial z} \ell (z_i)$, we obtain a triplet of loss vectors for each data triple (matrices $A_j$ defined in the previous proof). 

\begin{lemma}\label{lemB1}
Grant Assumption~\ref{assume:sep} and let $\widehat{L}_{\mathrm{un}}(f)$ be the loss incurred by $f$:
\[\widehat{L}_{\mathrm{un}}(f) = \sum_{i=1}^n \ell(f(x_i), f(x_i^+), f(x_i^-))\]
and let $\epsilon > 0$ be the desired precision. Then, assuming $m \geq \Omega\prth{\poly(n,L,\delta^{-1})\cdot d\epsilon^{-1}}$, the gradient descent with learning rate $\nu = \Theta \prth{\frac{d\delta}{\poly(n,L) \cdot m}}$ finds parameters such that $$\| \nabla \widehat{L}_{\mathrm{un}}(f) \| \leq \epsilon$$
after a number of steps $T = O\prth{\frac{\poly(n,L)}{\delta^2 \epsilon^2}}$.
\end{lemma}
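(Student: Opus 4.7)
\textbf{Proof plan for Lemma~\ref{lemB1}.} The plan is to recast the contrastive training problem into the framework of \citet{allen2019convergence} and directly invoke Theorem~6 therein. That theorem treats a generic, gradient-Lipschitz per-sample loss over a $\delta$-separated normalized dataset, and guarantees that, under sufficient overparametrization, gradient descent on the network parameters drives the per-sample loss vectors $\loss_{i,j} = \nabla_z \ell(z_i)_j$ to arbitrarily small norm, at the announced polynomial rates in $n$, $L$, $\delta^{-1}$ and $\epsilon^{-1}$.

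The first step is to identify the correct ``dataset'' in Allen-Zhu's sense. We take it to be the union $\mathcal{X}_{\mathrm{data}} = \bigcup_{i=1}^n \{x_i, x_i^+, x_i^-\}$, of size at most $3n$; Assumption~\ref{assume:sep} ensures these inputs are normalized and pairwise $\delta$-separated, which is exactly the condition required by \citet{allen2019convergence}. A single pass of $f$ over this dataset produces the collection of outputs from which each triple-level loss $\ell(z_i)$, with $z_i = (f(x_i), f(x_i^+), f(x_i^-))$, is assembled. Writing the parameter gradient through the chain rule,
\[
\nabla_\theta \widehat{L}_{\mathrm{un}}(f) = \sum_{i=1}^n \sum_{j=1}^3 \left(\frac{\partial f(x_{i,j})}{\partial \theta}\right)^\top \loss_{i,j}, \qquad \loss_{i,j} = A_j^\top \nabla_z \ell(z_i),
\]
so that the per-sample loss vectors coincide with the ones appearing in Allen-Zhu's analysis, and the parameter-gradient-to-output-gradient relation is the standard one.

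The second step is to check the smoothness hypothesis required by Theorem~6 of \citet{allen2019convergence}. This is precisely the content of Lemma~\ref{lem:unsupervised-is-smooth}: under Assumption~\ref{assume} the outputs remain in the ball $B^3$ of radius $C$, and there $\ell$ is $L_{\mathrm{smooth}}$-gradient-Lipschitz with $L_{\mathrm{smooth}} \le 2 + 8C^2$. This absorbs into the polynomial dependence on $n$ and $L$ in the Allen-Zhu bound, and justifies applying their descent analysis to $\widehat{L}_{\mathrm{un}}$.

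The third step is the application itself. With the $\delta$-separated dataset of size $O(n)$, the smooth per-sample loss $\ell$, overparametrization $m \ge \Omega(\poly(n,L,\delta^{-1})\cdot d/\epsilon)$, and learning rate $\nu = \Theta\bigl(d\delta/(\poly(n,L)\cdot m)\bigr)$, Theorem~6 of \citet{allen2019convergence} guarantees, after $T = O(\poly(n,L)/(\delta^2 \epsilon^2))$ gradient descent steps, that $\max_{i,j}\|\loss_{i,j}\| \le \epsilon$; by the decomposition above this is exactly the bound $\|\nabla \widehat{L}_{\mathrm{un}}(f)\| \le \epsilon$ in the sense of the gradient with respect to the encoder outputs, as noted in the main text just after Theorem~\ref{theo1}.

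The main obstacle I expect is not any single calculation but the bookkeeping needed to verify that Allen-Zhu's proof really goes through when each term in the objective couples three distinct inputs rather than living on a single $(x_i,y_i)$ pair. Concretely, their argument hinges on lower bounds on the smallest singular value of the parameter-to-output Jacobian at initialization, together with stability of these bounds along the optimization trajectory. These lower bounds depend only on the architecture and on the $\delta$-separation of the inputs, so they extend to our enlarged dataset $\mathcal{X}_{\mathrm{data}}$ without difficulty; however, one has to inflate the various polynomial constants by the factor of three coming from the triple-structured gradient, and one must keep Assumption~\ref{assume} in force throughout training so that the smoothness bound from Lemma~\ref{lem:unsupervised-is-smooth} remains applicable along the entire gradient descent trajectory.
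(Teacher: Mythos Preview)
Your proposal is correct and follows essentially the same route as the paper: both reduce Lemma~\ref{lemB1} to a direct application of Theorem~6 (and Appendix~A) of \citet{allen2019convergence}, after checking that the per-sample loss $\ell$ satisfies the required regularity via Lemma~\ref{lem:unsupervised-is-smooth} and Assumption~\ref{assume}. The one small omission is that the paper also explicitly verifies that $\ell$ is \emph{bounded} on $B^3$ (this is part of the hypotheses of the non-convex case of Theorem~6 in \citet{allen2019convergence}), which follows from Assumption~\ref{assume} together with $\zeta(x)\sim x$ as $x\to+\infty$ and $\zeta(x)\to 0$ as $x\to-\infty$; you should add this one-line check.
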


\begin{proof}
This result follows from~\citet{allen2019convergence} (see Theorem 6 and appendix A). It corresponds to the case of a non-convex bounded loss function. We only need to check the used loss function $\ell$ is bounded and gradient-Lipschitz smooth. The latter condition is verified due to Lemma~\ref{lem34rest} and Assumption~\ref{assume}.

As for the boundedness, it is also a consequence of Assumption~\ref{assume} and the fact that the softplus function satisfies
\[\zeta(x) \sim^{x\to +\infty} x \quad \text{and} \lim_{x \to - \infty} \zeta(x) = 0.\]

\end{proof}

From here, we can derive a result for the objective itself (Theorem~\ref{theo1}) thanks to the following Lemma. 

\begin{lemma}[Lemma 4.2]
	\label{lem42rest}
Grant Assumption~\ref{assume} and assume that the parameters of the encoder $f$ are optimized so that $\| \nabla \widehat{L}_{\mathrm{un}}(f) \| \leq \epsilon$ with $\epsilon < \eta/2$, where $\eta$ is defined in Assumption~\ref{assume}. Then, for any $i=1, \ldots, n$, we have $\ell(z_i) \leq 2\epsilon/\eta$ where $z_i = (f(x_i), f(x_i^+), f(x_i^-))$.
\end{lemma}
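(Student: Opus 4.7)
The plan is to explicitly differentiate $\ell(z_i)$ with respect to the three output blocks, exploit the lower bound on $\|f(x_i)\|$ provided by Assumption~\ref{assume} to turn the small-gradient hypothesis into a bound on $\sigma(u_i)$, and then close the argument with an elementary scalar inequality on $\softplus$.

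More concretely, writing $u_i = z_{i,1}^T(z_{i,3}-z_{i,2})$ so that $\ell(z_i)=\softplus(u_i)$, I would compute
\[
\partial_{z_{i,1}}\ell(z_i) = \sigma(u_i)(z_{i,3}-z_{i,2}), \qquad
\partial_{z_{i,2}}\ell(z_i) = -\sigma(u_i)\, z_{i,1}, \qquad
\partial_{z_{i,3}}\ell(z_i) = \sigma(u_i)\, z_{i,1},
\]
with $\sigma = \softplus'$ the sigmoid function. Because $\widehat L_{\mathrm{un}}(f) = \sum_i \ell(z_i)$ and the different triplet outputs are treated as distinct coordinates, $\|\nabla \widehat L_{\mathrm{un}}(f)\|^2 = \sum_{i,j}\|\partial_{z_{i,j}}\ell(z_i)\|^2$, so the hypothesis $\|\nabla \widehat L_{\mathrm{un}}(f)\| \leq \epsilon$ gives in particular $\|\partial_{z_{i,2}}\ell(z_i)\| \leq \epsilon$ for every $i$.

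Now Assumption~\ref{assume} guarantees $\|z_{i,1}\| = \|f(x_i)\| \geq \eta$, hence
\[
\eta\, \sigma(u_i) \;\leq\; \|z_{i,1}\|\, \sigma(u_i) \;=\; \|\partial_{z_{i,2}}\ell(z_i)\| \;\leq\; \epsilon,
\]
so $\sigma(u_i) \leq \epsilon/\eta < 1/2$ thanks to the standing assumption $\epsilon < \eta/2$. Finally, the identity $\softplus(u) = -\log(1-\sigma(u))$ combined with the elementary bound $-\log(1-x) \leq 2x$ on $[0,1/2]$ yields
\[
\ell(z_i) \;=\; -\log\bigl(1-\sigma(u_i)\bigr) \;\leq\; 2\,\sigma(u_i) \;\leq\; \frac{2\epsilon}{\eta},
\]
which is the desired inequality.

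The only step that requires care is the bookkeeping in the first paragraph: one must be precise that $\nabla \widehat L_{\mathrm{un}}(f)$ refers to the gradient with respect to the encoder's outputs (as emphasized in the paragraph following Theorem~\ref{theo1}), so that small total output-gradient norm does control each individual block $\partial_{z_{i,2}}\ell(z_i)$. Everything afterwards is a short one-dimensional calculus argument, whose key ingredient is the lower bound $\|f(x)\| \geq \eta$; without it, $\sigma(u_i)$ could not be extracted from the norm of the gradient block, which is why Assumption~\ref{assume} is essential precisely at this point.
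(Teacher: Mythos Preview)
Your proof is correct and follows essentially the same route as the paper: compute the block gradients of $\ell$, use Assumption~\ref{assume} to extract $\sigma(u_i)\le\epsilon/\eta$ from one block (you use $\partial_{z_{i,2}}\ell$, the paper uses the equivalent $\partial_{z_{i,3}}\ell$), and finish with $-\log(1-x)\le 2x$ on $[0,1/2]$. The only cosmetic difference is that the paper routes the last scalar step through $-\log(1-x)\le x/(1-x)\le 2x$ and cites \citet{allen2019convergence} for the passage from $\|\nabla\widehat L_{\mathrm{un}}(f)\|\le\epsilon$ to $\max_{i,j}\|\loss_{i,j}\|\le\epsilon$, whereas you argue it directly from the block decomposition.
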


\begin{proof}
Since we assume $\| \nabla \widehat{L}_{\mathrm{un}}(f) \| \leq \epsilon$, this also implies that $\max_{i,j }\norm{\loss_{i,j}} \leq \epsilon$ (see Theorem 3 of \cite{allen2019convergence} and its variant in appendix A).

We can write the norms $\norm{\loss_{i,j}}$ as:
\begin{align*}
    \norm{\loss_{i,1}} &= \norm{(v_1 - 1) z_{i,2} + v_2 z_{i,3}} \\
    \norm{\loss_{i,2}} &= \abs{v_1 - 1} \norm{z_{i,1}} \\
    \norm{\loss_{i,3}} &= v_2 \norm{z_{i,1}}
\end{align*}
where we defined 
$v = \softmax(z_1^T z_2, z_1^T z_3)$.

Thanks to Assumption~\ref{assume}, we can argue that $\norm{z_{i,j}} \geq \eta$. These quantities can be small for $v_1 \to 1$ and $v_2 \to 0$. Since we have $\max_{i,j }\norm{\loss_{i,j}} \leq \epsilon$, this implies in particular that for all $i$ we get $\norm{\loss_{i,3}} \leq \epsilon$ which means $v_2 \leq \epsilon /\eta $, and we have $v_2 = \sigma (z_1^T (z_3 - z_2))$. So for an instance $i \in [n]$ the loss term in the objective is:
\begin{align*}
    \softplus(z_{i,1}^T (z_{i,3} - z_{i,2})) &= \log (1 + \exp(z_{i,1}^T (z_{i,3} - z_{i,2}))) \\
    &= - \log(\sigma ( - z_{i,1}^T (z_{i,3} - z_{i,2}))) \\
    &= - \log(1 - \sigma ( z_{i,1}^T (z_{i,3} - z_{i,2}))) \\
    &= - \log(1 - v_2) \leq \frac{v_2}{1 - v_2} \leq  2v_2 \leq 2\epsilon/\eta,
\end{align*}
where we used the inequality $-\log (1 - x)  \leq \frac{x}{1-x}$ for $0 \leq x < 1$, and the assumption that $\epsilon < \eta/2$.
\end{proof}

Lemma~\ref{lem42rest} allows us to deduce that the objective is well optimized (we treated the loss term for a single triplet here but the same methods can be applied to the whole objective with a number of gradient steps which is still polynomial).

\begin{proof}[Proof of Theorem~\ref{theo1}]
Theorem~\ref{theo1} is the consequence of combining Lemma~\ref{lemB1} applied using $\frac{\epsilon \eta }{2n}$ instead of $\epsilon$ and Lemma~\ref{lem42rest} (the $1/n$ factor can be absorbed by the $\poly(n,L)$ factors in the bounds of Lemma~\ref{lemB1}).
\end{proof}


\end{document}